\documentclass[letterpaper]{article} 
\usepackage{aaai23}  
\usepackage{times}  
\usepackage{helvet}  
\usepackage{courier}  
\usepackage[hyphens]{url}  
\usepackage{graphicx} 
\urlstyle{rm} 
\usepackage{natbib}  
\usepackage{caption} 
\frenchspacing  
\setlength{\pdfpagewidth}{8.5in} 
\setlength{\pdfpageheight}{11in} 
%
\usepackage{algorithm}
\usepackage{algorithmic}

%
\usepackage{newfloat}
\usepackage{listings}
\DeclareCaptionStyle{ruled}{labelfont=normalfont,labelsep=colon,strut=off} 
\lstset{%
	basicstyle={\footnotesize\ttfamily},
	numbers=left,numberstyle=\footnotesize,xleftmargin=2em,
	aboveskip=0pt,belowskip=0pt,%
	showstringspaces=false,tabsize=2,breaklines=true}
\floatstyle{ruled}
\newfloat{listing}{tb}{lst}{}
\floatname{listing}{Listing}
%
\pdfinfo{
/TemplateVersion (2023.1)
}


\usepackage[utf8]{inputenc} 
\usepackage{booktabs}       
\usepackage{amsfonts}       
\usepackage{nicefrac}       
\usepackage{microtype}      
\usepackage{xcolor}         

\usepackage{amsmath}
\usepackage{amssymb}
\usepackage{mathtools}
\usepackage{amsthm}
\usepackage{paralist}
\usepackage[capitalize,noabbrev]{cleveref}
\usepackage{soul}
\usepackage[inline]{enumitem}
\usepackage{thmtools} 
\usepackage{thm-restate}
\usepackage{dsfont}

\theoremstyle{plain}
\newtheorem{theorem}{Theorem}[section]
\newtheorem{proposition}[theorem]{Proposition}

\theoremstyle{definition}

\theoremstyle{remark}

\usepackage[textsize=tiny]{todonotes}
\usepackage{svg}
\usepackage{subcaption}
\graphicspath{ {./images/} }
\captionsetup[sub]{font=footnotesize}


\usepackage{amsmath,amsfonts,bm}









\def\eqref#1{equation~\ref{#1}}









\def\1{\bm{1}}








\def\vf{{\bm{f}}}

\def\vu{{\bm{u}}}

\def\vx{{\bm{x}}}


\def\mA{{\bm{A}}}

\def\mD{{\bm{D}}}

\def\mH{{\bm{H}}}
\def\mI{{\bm{I}}}

\def\mL{{\bm{L}}}

\def\mR{{\bm{R}}}
\def\mS{{\bm{S}}}
\def\mT{{\bm{T}}}
\def\mU{{\bm{U}}}

\def\mX{{\bm{X}}}
\def\mY{{\bm{Y}}}

\DeclareMathAlphabet{\mathsfit}{\encodingdefault}{\sfdefault}{m}{sl}
\SetMathAlphabet{\mathsfit}{bold}{\encodingdefault}{\sfdefault}{bx}{n}


\def\gG{{\mathcal{G}}}













\DeclareMathOperator*{\argmin}{arg\,min}

\renewcommand\vec{\mathbf}

\newcommand*{\cora}{{\sc Cora}}

\newcommand*{\cornell}{{\sc Cornell}}
\newcommand*{\texas}{{\sc Texas}}
\newcommand*{\wisconsin}{{\sc Wisconsin}}
\newcommand*{\chameleon}{{\sc Chameleon}}
\newcommand*{\squirrel}{{\sc Squirrel}}
\newcommand*{\actor}{{\sc Actor}}
\newcommand*{\euroairport}{{\sc Europe Airport}}

\setcounter{secnumdepth}{1} 

\title{Restructuring Graphs for Higher Homophily via Adaptive Spectral Clustering}

\author{
    Shouheng Li,\textsuperscript{\rm 1,\rm 3}
    Dongwoo Kim,\textsuperscript{\rm 2}
    Qing Wang\textsuperscript{\rm 1}
}
\affiliations{
    \textsuperscript{\rm 1} School of Computing, Australian National University, Canberra, Australia\\
    \textsuperscript{\rm 2} CSE \& GSAI, POSTECH, Pohang, South Korea\\
    \textsuperscript{\rm 3} Data61, CSIRO, Canberra, Australia\\


    shouheng.li@anu.edu.au, dongwoo.kim@postech.ac.kr, qing.wang@anu.edu.au
}

\begin{document}

\maketitle

\begin{abstract}

While a growing body of literature has been studying new Graph Neural Networks (GNNs) that work on both homophilic and heterophilic graphs, little has been done on adapting classical GNNs to less-homophilic graphs. Although the ability to handle less-homophilic graphs is restricted, classical GNNs still stand out in several nice properties such as efficiency, simplicity, and explainability. In this work, we propose a novel graph restructuring method that can be integrated into any type of GNNs, including classical GNNs, to leverage the benefits of existing GNNs while alleviating their limitations. Our contribution is threefold:
\begin{enumerate*}[label=\emph{\alph*})]
\item learning the weight of \emph{pseudo-eigenvectors} for an adaptive spectral clustering that aligns well with known node labels, 
\item proposing a new density-aware homophilic metric that is robust to label imbalance, and
\item reconstructing the adjacency matrix based on the result of adaptive spectral clustering to maximize homophilic scores. The experimental results show that our graph restructuring method can significantly boost the performance of six classical GNNs by an average of 25\% on less-homophilic graphs. The boosted performance is comparable to state-of-the-art methods.\footnote{The extended version is available at \url{https://arxiv.org/abs/2206.02386}. The code is available at \url{https://github.com/seanli3/graph_restructure}.}
\end{enumerate*}

\end{abstract}

\section{Introduction}
\label{sec:intro}

\begin{figure}[t!]
\centering
\begin{subfigure}{.45\columnwidth}
\includegraphics[clip,width=\textwidth,bb=0 0 289 286]{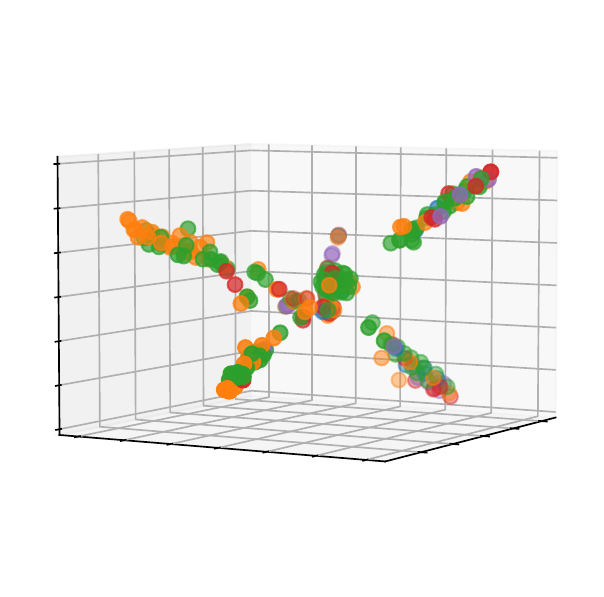}
\subcaption{\wisconsin{}: visualized using T-SNE with the leading $5$ eigenvectors}
\label{subfig:wisconsin_low_sc}
\end{subfigure}
\hfill
\begin{subfigure}{.45\columnwidth}
\includegraphics[clip,width=\textwidth]{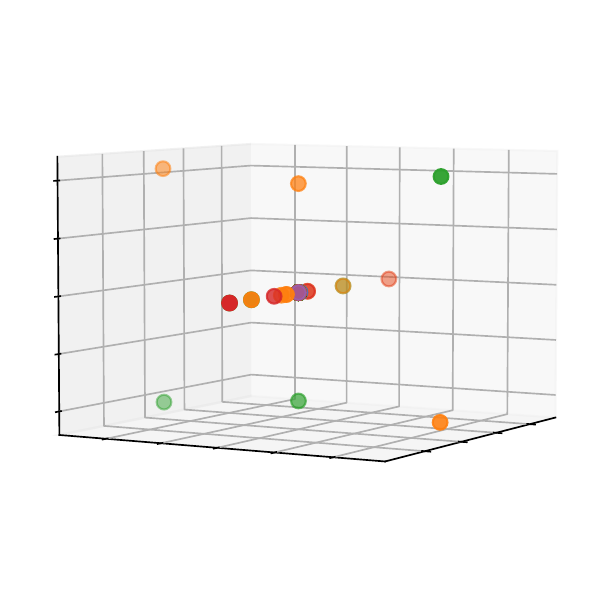}
\subcaption{\wisconsin{}: visualized using 22th, 44th and 206th eigenvectors}
\label{subfig:wisconsin_sc_manual}
\end{subfigure}
\\
\begin{subfigure}{.45\columnwidth}
\includegraphics[clip,width=\textwidth]{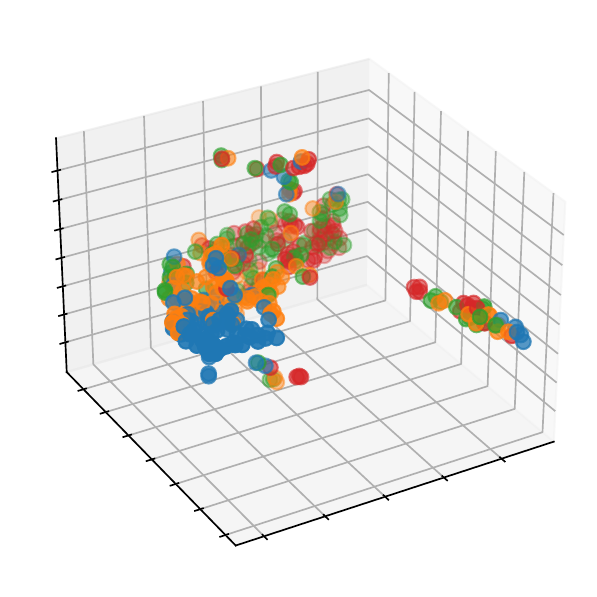}
\subcaption{\euroairport{}: visualized using T-SNE with the leading $5$ eigenvectors}
\label{subfig:europe_low_sc}
\end{subfigure}
\hfill
\begin{subfigure}{.45\columnwidth}
\includegraphics[clip,width=\textwidth]{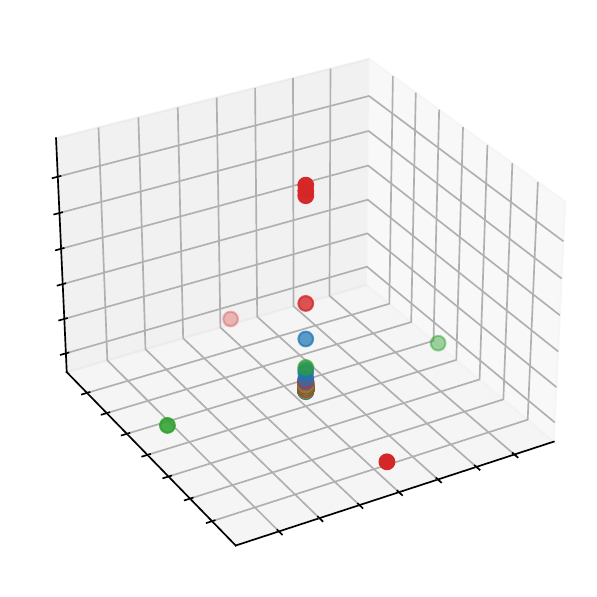}
\subcaption{\euroairport{}: visualized using the 366th, 382th and 3rd eigenvectors}
\label{subfig:europe_sc_manual}
\end{subfigure}
\caption{Node clusters using different eigenvector choices on \wisconsin{} and \euroairport{}. Colours represent node labels. Coordinates in \ref{subfig:wisconsin_low_sc} and \ref{subfig:europe_low_sc} are computed using three-dimensional T-SNE. In \ref{subfig:wisconsin_sc_manual} and \ref{subfig:europe_sc_manual} many nodes are overlapped so they appear to have fewer nodes than they actually have.}
\label{fig:tsne_embed}
\end{figure}


Graph Neural Networks (GNNs) were originally inspired under the homophilic assumption - nodes of the same label are more likely to be connected than nodes of different labels.
Recent studies have revealed the limitations of this homophilic assumption when applying GNNs on less-homophilic or heterophilic graphs~\citep{Pei2020}.
Since then, a number of approaches have been proposed with a focus on developing deep learning architectures for heterophilic graphs~\citep{zhu2021graph, kim21howto, chien21, Li2021-uj, Bo2021-tm, Lim2021-qz}.
However, little work has been done on adapting existing GNNs to less-homophilic graphs. Although existing GNNs such as GCN~\citep{DBLP:conf/iclr/KipfW17}, ChevNet~\citep{DBLP:conf/nips/DefferrardBV16}, and GAT~\citep{DBLP:conf/iclr/VelickovicCCRLB18} are lacking the ability to work with less-homophilic graphs, they still stand out in several nice properties such as efficiency~\citep{Zeng2019-nz}, simplicity~\citep{DBLP:conf/icml/WuSZFYW19}, and explainability~\citep{Ying2019-ws}. 
Our work aims to develop a graph restructuring method that can restructure a graph to leverage the benefit of prevalent homophilic GNNs. Inspired by~\citet{Klicpera2019-ut}, we extend the restructuring approach to heterophilic graphs and boost the performance of GNNs that do not work well on less-homophilic graphs.

\paragraph{Discrepancy between SC and labels}
Spectral clustering (SC) aims to cluster nodes such that the edges between different clusters have low weights and the edges within a cluster have high weights. Such clusters are likely to align with node labels where a graph is homophilic, i.e., nodes with the same labels are likely to connect closely. Nonetheless, this may not hold for less-homophilic graphs. Figure \ref{fig:tsne_embed} visualizes the nodes in \wisconsin{}~\citep{Pei2020} and \euroairport{}~\citep{Ribeiro2017} datasets based on the eigenvectors corresponding to the five smallest eigenvalues. As shown in Figure \ref{subfig:wisconsin_low_sc} and \ref{subfig:europe_low_sc}, the labels are not aligned with the node clusters. However, if we choose the eigenvectors carefully, nodes clusters can align better with their labels, as evidenced by Figure \ref{subfig:wisconsin_sc_manual} and \ref{subfig:europe_sc_manual}, which are visualized using manually chosen eigenvectors. 

This observation shows that the eigenvectors corresponding to the leading eigenvalues do not always align well with the node labels. Particularly, in a heterophilic graph, two adjacent nodes are unlikely to have the same label, which is in contradiction with the smoothness properties of leading eigenvectors. However, when we choose eigenvectors appropriately, the correlation between the similarity of spectral features and node labels increases.
To generalize this observation, we propose an adaptive spectral clustering algorithm that, 
\begin{enumerate*}[label=\emph{\alph*})]
\item divides the Laplacian spectrum into even slices, each slice corresponding to an embedding matrix called \emph{pseudo-eigenvector};
\item learns the weights of \emph{pseudo-eigenvectors} from existing node labels;
\item restructures the graph according to node embedding distance to maximize homophily.
\end{enumerate*}
To measure the homophilic level of graphs, we further introduce a new density-aware metric that is robust to label imbalance and label numbers. Our experimental results show that the performances of node-level prediction tasks with restructured graphs are greatly improved on classical GNNs.





\section{Background}
\label{sec:pre}

\paragraph{Spectral filtering.}
Let $\gG=(V,E,\mA, \mX)$ be an undirected graph with $N$ nodes, where $V$, $E$, and $\mA \in \{0, 1\}^{N\times N}$ are the node set, edge set, and adjacency matrix of $\gG$, respectively, and $\mX \in \mathbb{R}^{N\times F}$ is the node feature matrix. The normalized Laplacian matrix of $\gG$ is defined as $\mL = \mI - \mD^{-1/2}\mA\mD^{-1/2}$, where $\mI$ is the identity matrix with $N$ diagonal entries and $\mD \in {\mathbb R}^{N\times N}$ is the diagonal degree matrix of $\gG$. 
In spectral graph theory, the eigenvalues $\Lambda = \text{diag}(\lambda_1,...,\lambda_N)$ and eigenvectors $\mU$ of $\mL = \mU\Lambda \mU^H$ are known as the graph's spectrum and spectral basis, respectively, where $\mU^H$ is the Hermitian transpose of $\mU$. The graph Fourier transform takes the form of $\hat{\mX} = \mU^H\mX$ and its inverse is $\mX = \mU\hat{\mX}$.

It is known that the Laplacian spectrum and spectral basis carry important information on the connectivity of a graph~\citep{DBLP:journals/spm/ShumanNFOV13}. Lower frequencies correspond to global and smooth information on a graph, while higher frequencies correspond to local information, discontinuities and possible noise~\citep{DBLP:journals/spm/ShumanNFOV13}. One can apply a spectral filter and use graph Fourier transform to manipulate signals on a graph in various ways, such as smoothing and denoising~\citep{DBLP:conf/globalsip/SchaubS18}, abnormally detection~\citep{5967745} and clustering~\citep{8462239}.
Spectral convolution on graphs is defined as the multiplication of a signal $\vx$ with a filter $g(\Lambda)$ in the Fourier domain, i.e.,
\begin{equation}\label{eqn:spectral_filter}
g(\mL)\vec{x} = g(\mU\Lambda \mU^H)\vx = \mU g(\Lambda)\mU^H\vx = \mU g(\Lambda)\hat{\vx}.
\end{equation}

\paragraph{Spectral Clustering (SC) as low-pass filtering.}\label{para:sc_low} 
SC is a well-known method for clustering nodes in the spectral domain. A simplified SC algorithm is described in the Appendix.
Classical SC can be interpreted as low-pass filtering on a one-hot node signal $\delta_i \in \mathbb{R}^N$, where the $i$-th element of $\delta_i$ is $1$, i.e. $\delta_i(i) =1$, and $0$ elsewhere, for each node $i$. Filtering $\delta_i$ in the graph Fourier domain can be expressed as 
\begin{equation}
\label{eqn:sc_lowpass}
    \vf_{i} = g_{\lambda_L}(\Lambda)\mU^H\delta_i,
\end{equation}
where $g_{\lambda_L}$ is the low-pass filter that filter out components whose frequencies are greater than $\lambda_L$ as 
\begin{align}
\label{eqn:low_pass}
    g_{\lambda_L}(\lambda) =
    \begin{cases}
      1 & \text{if $\lambda \leq \lambda_L$;}\\
      0 & \text{otherwise}.
    \end{cases}
\end{align}

As pointed out by \citet{DBLP:conf/icassp/TremblayPBGV16,DBLP:conf/nips/RamasamyM15}, the node distance $\lVert\vf_i-\vf_j\rVert^2$ used in SC can be approximated by filtering random node features with $g_{\lambda_L}$. Consider a random node feature matrix $\mR = [\vec{r}_1|\vec{r}_2|...|\vec{r}_\eta]\in \mathbb{R}^{N \times \eta}$ consisting of $\eta$ random features $\vec{r}_i \in\mathbb{R}^{N}$ i.i.d sampled form the normal distribution with zero mean and $1/\eta$ variance, let $\mH_{\lambda_L} = \mU g_{\lambda_L}(\Lambda)\mU^H$, 
we can define 
\begin{equation}
\label{eqn:sc_feature}
\tilde{\vf}_{i} = (\mH_{\lambda_L}\mR)^H\delta_i.
\end{equation}
 $\mR$ is a random Gaussian matrix of zero mean and $\mU$ is orthonormal. We apply the Johnson-Lindenstrauss lemma (see Appendex) 
 to obtain the following error bounds.

\begin{proposition}[\citet{DBLP:conf/icassp/TremblayPBGV16}]
\label{proposition:tremblay}
Let $\epsilon,\beta > 0$ be given. If $\eta$ is larger than:
\begin{align}
    \eta_0 = \frac{4+2\beta}{\epsilon^2/2 - \epsilon^3/3}\log{N},
\end{align}
then with probability at least $1-N^{-\beta}$, we have: $\forall (i,j)\in [1, N]^2$,
\begin{align}
\label{eqn:error_bounds}
    (1-\epsilon)\lVert \vf_i - \vf_j \rVert^2 \leq \lVert \tilde{\vf}_i - \tilde{\vf}_j \rVert^2 \leq (1+\epsilon)\lVert\vf_i - \vf_j\rVert^2.
\end{align}
\end{proposition}

Hence, $\lVert\tilde{\vf}_{i} - \tilde{\vf}_{j}\rVert$ is a close estimation of the Euclidean distance $\lVert\vf_{i} - \vf_{j}\rVert$.
Note that the approximation error bounds in Equation~\ref{eqn:error_bounds} also hold for any band-pass filter $\bar{g}_{e_1,e_2}$: 
\begin{align}
\label{eqn:band_pass}
    \bar{g}_{e_1,e_2}(\lambda) =
    \begin{cases}
      1 & \text{if $e_1 < \lambda \leq e_2$}\\
      0 & \text{otherwise}.
    \end{cases}
\end{align}
where $0 \leq e_1 \leq e_2 \leq 2$ since the spectrum of a normalized Laplacian matrix has the range $[0,2]$. 
This fact is especially useful when the band pass $\bar{g}_{e_1,e_2}$ can be expressed by a specific functional form such as polynomials, since $\mH_{e_1,e2}=\mU \bar{g}_{e_1,e2}(\Lambda)\mU^H$ can be computed without eigen-decomposition.



\section{Adaptive Spectral Clustering}
\label{sec:method}

\begin{figure*}[t]
\begin{subfigure}{.7\textwidth}
\includegraphics[clip,width=\columnwidth]{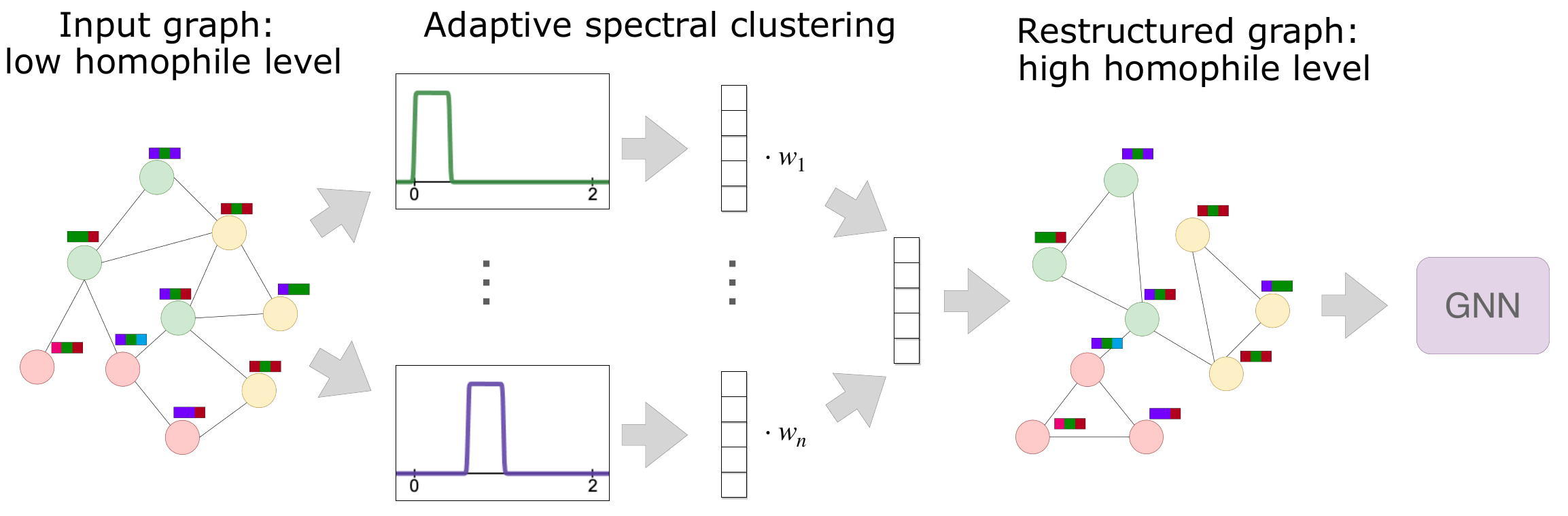}
\subcaption{Graph restructuring workflow using adaptive spectral clustering}
\label{fig:workflow}
\end{subfigure}
\hfill
\begin{subfigure}{.25\textwidth}
\includegraphics[clip,width=\columnwidth]{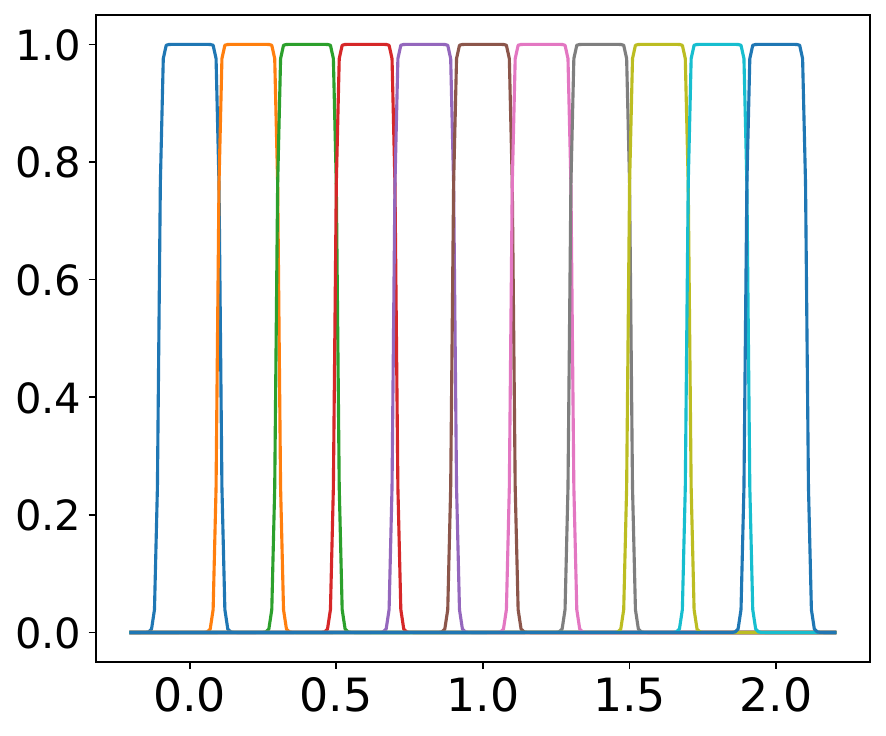}
\subcaption{A set of slicers}
\label{fig:slicers}
\end{subfigure}
\caption{Adaptive spectral clustering using spectrum slicers. In \ref{fig:workflow} $w_1, ..., w_n$ are adaptive scalar parameters. In \ref{fig:slicers}, the spectrum range $[0,2]$ is sliced into equal-width pseudo-eigenvalues by a set of slicers with $s=20$.}
\end{figure*}


Now we propose an adaptive method for spectral clustering, which aligns the clustering structure with node labels by learning the underlying frequency patterns. This empowers us to restructure a graph to improve graph homophily while preserving the original graph structures as much as possible.




\subsection{Learning eigenvector coefficients}
Let $\vf_i^\mathcal{Z}$ be the representation of a node $i$ obtained from an arbitrary set $\mathcal{Z} \subseteq [N]$ of eigenvectors in place of the ones with the leading eigenvalues. We cast the problem of finding eigenvectors to align with the known node labels into a minimization problem of computing the distance between the representations of nodes $i$ and $j$ when their labels are the same. Let $d(\cdot)$ be a distance metric between two node representations. Then, our objective is formalized as
\begin{align}
    \label{eqn:objective-raw}
    \argmin_\mathcal{Z} \sum_{i, j \in V_Y} \ell(d(\vf_i^\mathcal{Z}, \vf_j^\mathcal{Z}), \mathds{1}(y_i, y_j)),
\end{align}
where $V_Y$ is a collection of nodes whose labels are available, $\mathds{1}$ is an indicator function, $y_i$ is the label of node $i$, and $\ell$ is a loss function. 
The loss function penalizes the objective when two nodes of the same label are far from each other, as well as when two nodes of different labels are close.
In this work, we use Euclidean distance and a contrastive loss function.

Solving the above objective requires iterating over all possible combinations of eigenvectors, which is infeasible in general. It also requires performing expensive eigendecomposition with $O(N^3)$ time complexity. In addition, classical SC does not consider node features when computing node distance. To address these challenges, we introduce two key ideas to generalize SC in the following.

\subsection{Eigendecomposition-free SC}
As explained in Section \ref{sec:pre}, $\lVert\vf_i - \vf_j\rVert$ can be approximated by a filtering operation under the Johnson-Lindenstrauss lemma. The same holds true for the generalized case $\vf_i^\mathcal{Z}$. 
However, the operation still requires expensive eigendecomposition as Equation~\ref{eqn:band_pass} takes eigenvalues explicitly. To mitigate this issue, we propose to use a series of rectangular functions, each serving as a band-pass filter that ``slices" the Laplacian spectrum into a finite set of equal-length and equal-magnitude ranges. Each filter takes the same form of Equation \ref{eqn:band_pass}, but is relaxed on the continuous domain. This formulation comes with two major advantages. Firstly, rectangular functions can be efficiently approximated with polynomial rational functions, thus bypassing expensive eigendecomposition. Secondly, each $g_{e_1,e_2}$ groups frequencies within the range $(e_1, e_2]$ to form a ``coarsened" \emph{pseudo-eigenvalue}. Because nearby eigenvalues capture similar structural information, the ``coarsening" operation reduces the size of representations, while still largely preserving the representation power.

\paragraph{Spectrum slicers.}

We approximate the band-pass rectangular filters in Equation \ref{eqn:band_pass} using a rational function  
\begin{equation}
\label{eqn:slicer}
    \hat{g}_{s,a} = \frac{1}{s^{2m}}\left(\left(\frac{\lambda-a}{2+\hat{\epsilon}} \right)^{2m}+\frac{1}{s^{2m}}\right)^{-1}
\end{equation}
where $s\geq2$ is a parameter that controls the width of the passing window on the spectrum, $a\in[0,2]$ is a parameter that controls the horizontal center of the function, $m$ is the approximation order. Figure \ref{fig:slicers} shows an example of these functions. With properly chosen $s$ and $a$, the Laplacian spectrum can be evenly sliced into chunks of range $(\lambda_{i}, \lambda_{i+1})$. Each chunk is a \emph{pseudo-eigenvalue} that umbrellas eigenvalues within the range. Substituting $g(\lambda)$ in Equation \ref{eqn:spectral_filter} with $\hat{g}_{s,a}$, 
the spectral filtering operation becomes
\begin{equation}
\label{eqn:slicer_matrix}
\mU \hat{g}_{s,a}(\Lambda)\mU^H\vx =\frac{1}{s^{2m}}\left(\left(\frac{\mL-a\mI}{2+\hat{\epsilon}}\right)^{2m}+\frac{\mI}{s^{2m}}\right)^{-1} 
\notag
\end{equation}
where $\mU\Lambda\mU^H = \mL$. 
An important property of Equation \ref{eqn:slicer_matrix} is that the matrix inversion can be computed via truncated Neumann series~\citep{Wu2013-bt}. This can bring the computation cost of $O(N^3)$ down to $O(pN^2)$.
\begin{restatable}[]{lemma}{newmannlemma}
\label{lemma:neumann}
For all $\hat{\epsilon} > \frac{2s^{2m}}{s^{2m}-1}-2$, the inverse of $\mT = \left(\frac{\mL-a\mI}{2+\hat{\epsilon}}\right)^{2m}+\frac{\mI}{s^{2m}}$ can be expressed by a Neumann series with guaranteed convergence (A proof is given in Appendix).
\end{restatable}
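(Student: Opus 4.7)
The plan is to recognise $\mT^{-1}$ as the sum of the standard Neumann series $\mT^{-1} = \sum_{k=0}^{\infty}(\mI-\mT)^k$, which converges in operator norm as soon as the spectral radius satisfies $\rho(\mI-\mT) < 1$. Because $\mT$ is a matrix polynomial in the symmetric normalized Laplacian $\mL$, it is simultaneously diagonalizable with $\mL$, and its eigenvalues are obtained by applying the scalar function
\begin{equation*}
\hat{\tau}(\lambda) \;=\; \left(\frac{\lambda-a}{2+\hat{\epsilon}}\right)^{2m} + \frac{1}{s^{2m}}
\end{equation*}
pointwise to the eigenvalues $\lambda_i$ of $\mL$. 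Hence the whole problem reduces to showing $|1-\hat{\tau}(\lambda_i)| < 1$ uniformly in $i$.

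First I would write $\hat{\tau}(\lambda_i) = t_i + 1/s^{2m}$ with $t_i := \bigl(\tfrac{\lambda_i - a}{2+\hat{\epsilon}}\bigr)^{2m}$. Since $2m$ is even, $t_i \ge 0$, so the upper side is immediate:
\begin{equation*}
1 - \hat{\tau}(\lambda_i) \;\le\; 1 - \frac{1}{s^{2m}} \;<\; 1.
\end{equation*}
Next I would derive the matching lower bound $1 - \hat{\tau}(\lambda_i) > -1$, equivalent to $t_i + 1/s^{2m} < 2$. Using $\lambda_i \in [0,2]$ and $a \in [0,2]$, we have the worst-case bound $|\lambda_i - a| \le 2$, hence $t_i \le \bigl(\tfrac{2}{2+\hat{\epsilon}}\bigr)^{2m}$.

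Now I would invoke the hypothesis $\hat{\epsilon} > \tfrac{2 s^{2m}}{s^{2m}-1} - 2$, which rearranges (using $s^{2m} > 1$) to
\begin{equation*}
\frac{2}{2+\hat{\epsilon}} \;<\; 1 - \frac{1}{s^{2m}}.
\end{equation*}
Raising to the $2m$-th power gives $t_i < (1 - 1/s^{2m})^{2m} \le 1$, so $t_i + 1/s^{2m} < 1 + 1/s^{2m} < 2$, confirming the lower bound. Combining the two bounds yields $\rho(\mI - \mT) < 1$, so the Neumann series $\sum_{k=0}^{\infty}(\mI - \mT)^k$ converges to $\mT^{-1}$, as claimed.

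The only real obstacle is checking that the particular algebraic threshold stated for $\hat{\epsilon}$ is precisely what turns the worst-case scalar inequality into a strict one after the $2m$-th power and the additive shift by $1/s^{2m}$; once the monotone rearrangement above is written down, everything else reduces to functional calculus on a symmetric matrix and the textbook convergence criterion for Neumann series, which I would cite rather than reprove.
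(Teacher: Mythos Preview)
Your argument is correct and follows essentially the same route as the paper: reduce to scalar bounds on $\hat{\tau}(\lambda_i)$ via functional calculus on the symmetric Laplacian, then invoke the standard criterion $\rho(\mI-\mT)<1$ for Neumann-series convergence. The only difference is cosmetic---the paper bounds $t_i$ by $\tfrac{2}{2+\hat{\epsilon}}$ directly (using $|x|^{2m}<|x|$ for $|x|<1$) to obtain the sharper $\hat{\tau}(\lambda_i)<1$, whereas you pass through $(1-1/s^{2m})^{2m}\le 1$ to get $\hat{\tau}(\lambda_i)<2$; both suffice, and your handling of the nonstrict inequality $t_i\ge 0$ is in fact cleaner than the paper's.
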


\subsection{SC with node features}
Traditionally, SC does not use node features. However, independent from graph structure, node features can provide extra information that is valuable for a clustering task~\citep{Bianchi2019-qj}. We therefore incorporate node features into the SC filtering operation by concatenating it with the random signal before the filtering operation
\begin{align}
    \label{eqn:filtered}
    \bm{\Gamma}_{s,a} = \hat{g}_{s,a}(\mL)(\mR \frown \mX),
\end{align}
where $\frown$ is a column-wise concatenation operation. $\bm{\Gamma}_{s,a}$ has the shape of $N \times (P+F)$ and is sometimes referred as "dictionary"~\citep{Thanou2014-xz}. 
When using a series of $g_{s,a}$, we have 
\begin{align}
    \bm{\Gamma} = (\bm{\Gamma}_{s_1,a_1} \frown \bm{\Gamma}_{s_2,a_2} \frown ...).
\end{align}
Let $P'$ be the dimension of embeddings and $\Theta$ represent a weight matrix or a feed-forward neural network. The concatenated dictionary is then fed into a learnable function to obtain a node embedding matrix $\mH \in \mathbb{R}^{N \times P'}$ as
\begin{align}
\label{eqn:node_embedding}
    \mH = \Theta(\bm{\Gamma}).
\end{align}
Our objective in Equation \ref{eqn:objective-raw} can then be realized as
\begin{align}
\label{eqn:objective}
    \mathcal{L}(\Theta) = \sum_{\substack{i,j \in V_Y \\ k \in \mathcal{N}_Y(i)\\ y_i=y_j}} \left[ ||\mH_{i\cdot} - \mH_{j\cdot} ||^2  - || \mH_{i\cdot} - \mH_{k\cdot} ||^2 + \epsilon  \right]_{+}
\end{align}
where $\mathcal{N}_Y(k)$ is a set of negative samples whose labels are different from node $i$, i.e. $y_i\neq y_k$. The negative samples can be obtained by randomly sampling a fixed number of nodes with known labels. The intuition is if the labels of nodes $i$ and $j$ are the same, then the distance between the two nodes needs to be less than the distance between $i$ and $k$, to minimize the loss. $[a]_+ = \max(a,0)$ and $\epsilon$ is a scalar offset between distances of intra-class and inter-class pairs. By minimizing the objective w.r.t the weight matrix $\Theta$, we can learn the weight of each band that aligns the best with the given labels.
Ablation study and spectral expressive power of this method is discussed separately in Appendix.

\subsection{Restructure graphs to maximize homophily}
\label{subsec:restructure}

After training, we obtain a distance matrix $\mD'$ where $\mD'_{ij}=\lVert\mH_{i\cdot}-\mH_{j\cdot} \rVert$. An intuitive way to reconstruct a graph is to start with a disconnected graph, and gradually add edges between nodes with the smallest distances, until the added edges do not increase homophily. We adopt the same approach as \citet{Klicpera2019-ut}, to add edges between node pairs of the $K$ smallest distance for better control of sparsity. Another possible way is to apply a threshold on $\mD'$ and entries above the threshold are kept as edges. Specifically, $\hat{\mA}$ is the new adjacency matrix whose entries are defined as
\begin{align}
    \hat{\mA}_{ij} = \begin{cases}
                    1, & \text{if } (i,j) \in \operatorname{topK^{-1}}(\mS)\\
                    0, & \text{otherwise},
                \end{cases}
\end{align}
where $\operatorname{topK^{-1}}$ returns node pairs of the $k$ smallest entries in $\mD'$. A simplified workflow is illustrated in Figure \ref{fig:workflow}. A detailed algorithm can be found in Appendix. 

\subsection{Complexity analysis}
\label{subsec:complexity}
The most expensive operation of our method is the matrix inversion in Equation \ref{eqn:slicer_matrix} which has the time complexity of $O(pN^2)$. A small $p\leq4$ is sufficient because the Neumann series is a geometric sum so exponential acceleration tricks can be applied. Equation \ref{eqn:slicer_matrix} is a close approximation to a rectangular function that well illustrates the spectrum slicing idea. In practice, it can be replaced by other slicer functions that do not require matrix inversion, such as a quadratic function $1-\left(s\mL-a\right)^{2}$, to further reduce cost. It is also worth noting that the matrix inversion and multiplication only need to be computed once and can be pre-computed offline as suggested by~\citep{RossiSIGN2020}. The training step can be mini-batched easily. We randomly sample $8 {-} 64$ negative samples per node so the cost of computing Equation \ref{eqn:objective} is low.

\section{A New Homophily Measure}
\label{sec:homo}

\begin{figure*}[t!]
         \centering
     \begin{subfigure}[b]{0.24\columnwidth}
         \centering
         \includegraphics[width=0.8\textwidth]{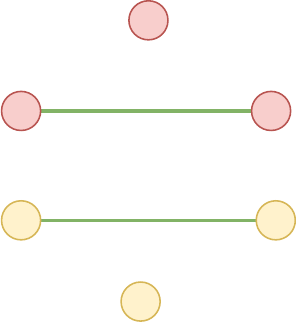}
         \caption{\\$h_{\text{edge}}=1$ \\ $h_{\text{node}}=1$ \\ $h_{\text{norm}}=1$ \\ $h_{\text{den}}=0.58$}
         \label{fig:homo1}
     \end{subfigure}
     \hfill
     \begin{subfigure}[b]{0.24\columnwidth}
         \centering
         \includegraphics[width=0.8\textwidth]{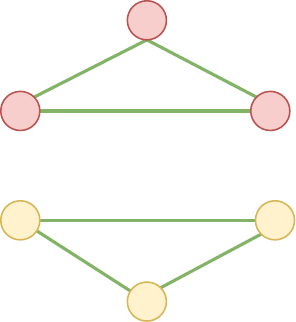}
         \caption{\\$h_{\text{edge}}=1$ \\ $h_{\text{node}}=1$ \\ $h_{\text{norm}}=1$ \\ $h_{\text{den}}=0.75$}
         \label{fig:homo2}
     \end{subfigure}
     \hfill
     \begin{subfigure}[b]{0.24\columnwidth}
         \centering
         \includegraphics[width=0.8\textwidth]{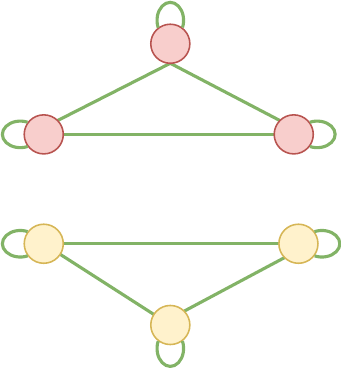}
         \caption{\\$h_{\text{edge}}=1$ \\ $h_{\text{node}}=1$ \\ $h_{\text{norm}}=1$ \\ $h_{\text{den}}=1$}
         \label{fig:homo3}
     \end{subfigure}
     \hfill
     \begin{subfigure}[b]{0.24\columnwidth}
         \centering
         \includegraphics[width=0.8\textwidth]{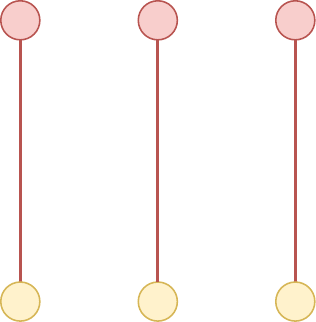}
         \caption{\\$h_{\text{edge}}=0$ \\ $h_{\text{node}}=0$ \\ $h_{\text{norm}}=0$ \\ $h_{\text{den}}=0.33$}
         \label{fig:hete2}
     \end{subfigure}
     \hfill
     \begin{subfigure}[b]{0.24\columnwidth}
         \centering
         \includegraphics[width=0.8\textwidth]{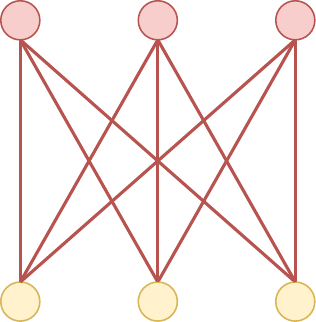}
         \caption{\\$h_{\text{edge}}=0$ \\ $h_{\text{node}}=0$ \\ $h_{\text{norm}}=0$ \\ $h_{\text{den}}=0$}
         \label{fig:hete3}
     \end{subfigure}
     \hfill
     \begin{subfigure}[b]{0.24\columnwidth}
         \centering
         \includegraphics[width=0.8\textwidth]{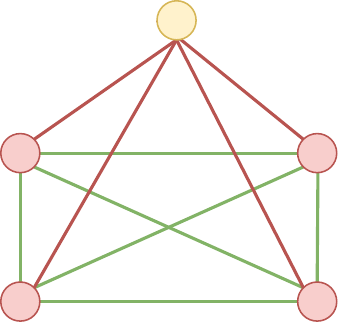}
         \caption{\\$h_{\text{edge}}=0.6$ \\ $h_{\text{node}}=0.6$ \\ $h_{\text{norm}}=0$ \\ $h_{\text{den}}=0$}
         \label{fig:imbalance1}
     \end{subfigure}
     \hfill
     \begin{subfigure}[b]{0.24\columnwidth}
         \centering
         \includegraphics[width=0.8\textwidth]{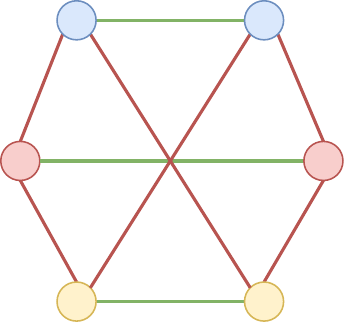}
         \caption{\\$h_{\text{edge}}=0.5$ \\ $h_{\text{node}}=0.5$ \\ $h_{\text{norm}}=0$ \\ $h_{\text{den}}=0.42$}
         \label{fig:regular1}
     \end{subfigure}
     \hfill
     \begin{subfigure}[b]{0.24\columnwidth}
         \centering
         \includegraphics[width=0.8\textwidth]{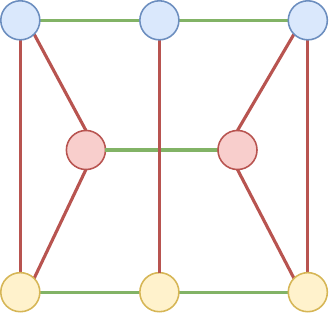}
         \caption{\\$h_{\text{edge}}=0.25$ \\ $h_{\text{node}}=0.42$ \\ $h_{\text{norm}}=0.11$ \\ $h_{\text{den}}=0.5$}
         \label{fig:regular2}
     \end{subfigure}
    \caption{Examples of graphs with different label-topology relationships and comparison of different homophily measures. The node colour represents the node labels. The red edges connect nodes of different labels, while the green edges connect nodes of the same labels. Figure \ref{fig:homo1} - \ref{fig:homo3} shows homophilic graphs of different densities. $h_{\text{den}}$ gives a higher score when a graph is dense, while the other metrics give the same scores. Figure \ref{fig:hete2} and \ref{fig:hete3} are two graphs that only consist of inter-class edges, but are of different densities. Figure \ref{fig:imbalance1} is a label-imbalanced graph. Figure \ref{fig:regular1} and Figure \ref{fig:regular2} are two regular graphs, where Figure \ref{fig:regular1} has an intra-class/inter-class edge ratio of $0.5$, Figure \ref{fig:regular2} is an example of an Erdos-Reyi graph sampled with uniform edge probability.}
    \label{fig:graph_examples}
\end{figure*}

Several methods have been proposed to measure the homophily of a graph~\citep{DBLP:conf/nips/ZhuYZHAK20,Lim2021-qz}. The two most used are edge homophily $h_{\text{edge}}$ and node homophily $h_{\text{node}}$: the former uses the proportion of edges connecting nodes of the same label
$h_{\text{edge}} = \frac{|{(u,v) \in E: y_u = y_v}|}{|E|}$,
while the later uses the proportion of a node's direct neighbours of the same label
$h_{\text{node}} = \frac{1}{N} \sum_{u \in V} \frac{|v \in \mathcal{N}_u:y_u = y_v|}{|\mathcal{N}_u|}$
where $\mathcal{N}_u$ is the neighbour set of node $u$, and $y$ is the node label.

As pointed out by \citet{Lim2021-qz}, both edge and node homophily suffer from sensitivity to both label number and label imbalance.
For instance, a balanced graph of $C$ labels would induce score of $\frac{1}{C}$ under both measures.
In addition, both metrics fail to handle cases of imbalanced labels, resulting in undesirably high homophily scores when the majority nodes are of the same label. 

To mitigate these issues, \citet{Lim2021-qz} proposed a new metric $h_{\text{norm}}$ that takes into account the label-wise node and edge proportion:
\begin{align}
    & h_{\text{norm}} = \frac{1}{K-1}\sum_{k=0}^{K-1}\left[h_k - \frac{|Y_k|}{N}\right]_+, \\
    \\
    & h_{k} = \frac{\sum_{u \in Y_k}{|v \in \mathcal{N}_u:y_u = y_v=k|}}{\sum_{u \in Y_k}|\mathcal{N}_u|},
\end{align}
$K$ is the number of unique labels, $Y_k$ is the node set of label $k$, and $h_k$ is the label-wise homophily.
Nevertheless, $h_{\text{norm}}$ only captures relative edge proportions and ignores graph connectivity, resulting in high homophily scores for highly disconnected graphs. 
For example, Figure \ref{fig:homo1} has the same $h_{\text{norm}}$ as \ref{fig:homo2} and \ref{fig:homo3}. The absence of edge density in the homophilic metric brings undesirable results in restructuring 
as the measurement always prefer a disconnected graph. 
Moreover, although $h_{\text{norm}}$ is lower-bounded by $0$, it does not explicitly define the meaning of $h_{\text{norm}}=0$ but instead refers to such graphs as less-homophilic in general, resulting in further confusion when comparing less-homophilic graphs.

Given the limitations of existing metrics, we propose a density-aware homophily metric. For a graph of $K>1$ labels, the following five propositions hold for our new metric:
\begin{enumerate}
    \item A dense homophilic graph of a complete set of intra-class edges and zero inter-class edges has a score of $1$. (Figure \ref{fig:homo3})
    \item A dense heterophilic graph of a complete set of inter-class edges and zero intra-class edges has a score of $0$. (Figure \ref{fig:hete3})
    \item An Erdos-Renyi random graph $G(n,p)$ of $n$ nodes and the edge inclusion probability $p$ has the score of $\approx 0.5$, i.e. a graph that is uniformly random (Figure \ref{fig:regular2}).
    \item A totally disconnected graph and a complete graph have the same score of $0.5$.
    \item For graphs with the same intra-class and inter-class edge ratios, the denser graph has a relatively higher score. (Figure \ref{fig:homo1}, \ref{fig:homo2} and \ref{fig:homo3}, Figure \ref{fig:hete2} and \ref{fig:hete3}, Figure \ref{fig:regular1} and \ref{fig:regular2})
\end{enumerate}

Propositions $1$ and $2$ define the limits of homophily and heterophily given a set of nodes and their labels. Propositions $3$ and $4$ define neutral graphs which are neither homophilic nor heterophilic. Proposition $3$ states that a uniformly random graph, which has no label preference on edges, is neutral thus has a homophily score of $0.5$. Proposition $5$ considers edge density: for graphs with the same tendencies of connecting inter- and intra-class nodes, the denser one has a higher absolute score value. The metric is defined as below.
\begin{align}
    \hat{h}_{\text{den}} = \min\{d_k - \bar{d}_k\}_{k=0}^{K-1}
    \label{eqn:h_hat_den}
\end{align}
where $d_k$ is the edge density of the subgraph formed by only nodes of label $k$, i.e. the intra-class edge density of $k$ (including self-loops), and $\bar{d}_{k}$ is the maximum inter-class edge density of label $k$
\begin{align}
    & d_k = \frac{2\left\lvert(u,v) \in E : k_u = k_v = k\right\rvert}{|Y_k|(|Y_k|+1)} \label{eqn:d_k},
\\
    & \bar{d}_{k} = \max\{d_{kj}:j=0,...,K-1; j\neq k\},
\end{align}
where $d_{kj}$ is the inter-class edge density of label $j$ and $k$, i.e. edge density of the subgraph formed by nodes of label $k$ or $j$.
\begin{align}
    d_{kj} = \frac{\big|(u,v) \in E : k_u = k, k_v = j\big|}{|Y_k||Y_j|}. \label{eqn:d_kj}
\end{align}

Equation \ref{eqn:h_hat_den} has the range $(-1,1)$. To make it comparable with the other homophily metrics, we scale it to the range $(0,1)$ using
\begin{align}
    h_{\text{den}} = \frac{1+\hat{h}_{\text{den}}}{2}.
\end{align}

\begin{table*}[t]
\centering
    \resizebox{0.85\linewidth}{!}{
          \begin{tabular}{l | l l  l  l  l l} 
             \toprule
              & \actor & \chameleon & \squirrel & \wisconsin & \cornell & \texas\\              
             \midrule
             GCN & $30.7\pm0.5$ & $59.8\pm2.6$ & $36.9\pm1.3$ & $64.1\pm6.3$ & $59.2\pm3.2$ & $64.1\pm4.9$\\
             GCN (GDC) & $35.0\pm0.5$ $(+4.3)$ & $62.2\pm1.2$ $(+2.4)$ & $45.3\pm1.3$ $(+8.4)$ & $53.9\pm2.6$ $(-10.2)$& $57.6\pm4.1$ $(-1.6)$ & $57.8\pm4.1$ $(-6.3)$\\
             GCN (ours) & $\underline{\textbf{36.2}}\pm1.0$ $(+5.5)$ & $\underline{\textbf{66.9}}\pm3.1$ $(+7.1)$ & $\underline{\textbf{55.7}}\pm2.4$ $(+18.8)$ & $\underline{\textbf{83.1}}\pm3.2$ $(+19.0)$& $\underline{\textbf{79.2}}\pm6.3$ $(+20.0)$ & $\underline{\textbf{78.4}}\pm5.4$ $(+14.3)$\\
             \midrule
             CHEV & $34.5\pm1.3$ & $66.0\pm2.3$ & $39.6\pm3.0$ & $82.5\pm2.8$ & $76.5\pm9.4$ & $79.7\pm5.0$\\
             CHEV(GDC) & $35.0\pm0.6$ $(+0.5)$ & $63.0\pm1.1$ $(-3.0)$ & $48.2\pm0.7$ $(+8.6)$& $83.5\pm2.9$ $(+1.0)$ &$\textbf{81.1}\pm3.2$ $(+4.6)$ & $79.2\pm3.0$ $(-0.5)$\\
             CHEV(Ours) & $\underline{\textbf{36.0}}\pm1.1$ $(+1.5)$ & $\textbf{66.8}\pm1.8$ $(+0.8)$ & $\underline{\textbf{55.0}}\pm2.0$ $(+15.4)$& $\textbf{84.3}\pm3.2$ $(+1.8)$& $80.8\pm4.1$ $(+4.3)$ & $\textbf{80.0}\pm4.8$ $(+0.3)$\\
             \midrule
             ARMA & $34.9\pm0.8$ & $62.1\pm3.6$ & $47.8\pm3.5$ & $78.4\pm4.6$ & $74.9\pm2.9$ & $82.2\pm5.1$\\ 
             ARMA (GDC) & $\underline{\textbf{35.9}}\pm0.5$ $(+1.0)$& $60.2\pm0.6$ $(-1.9)$& $47.8\pm0.8$ $(+0.0)$& $79.8\pm2.6$ $(+1.4)$& $78.4\pm4.1$ $(+3.5)$ & $78.4\pm3.2$ $(-3.8)$\\ 
             ARMA (ours) & $35.2\pm0.7$ $(+0.3)$& $\underline{\textbf{68.4}}\pm2.3$ $(+6.3)$& $\underline{\textbf{55.6}}\pm1.7$ $(+7.8)$& $\underline{\textbf{84.5}}\pm0.3$ $(+6.1)$& $\underline{\textbf{81.1}}\pm6.1$ $(+6.2)$ & $\textbf{81.1}\pm4.2$ $(-1.1)$\\ 
             \midrule
             GAT & $25.9\pm1.8$ & $54.7\pm2.0$  & $30.6\pm2.1$ & $62.0\pm5.2$ & $58.9\pm3.3$ & $60.0\pm5.7$\\
             GAT (GDC) & $35.0\pm0.6$ $(+9.1)$& $63.8\pm1.2$ $(+9.1)$& $48.6\pm2.1$ $(+18.0)$& $51.4\pm4.5$ $(-10.6)$& $58.9\pm2.2$ $(+0.0)$ & $77.1\pm8.3$ $(+17.1)$\\
             GAT (ours) & $\underline{\textbf{35.6}}\pm0.7$ $(+9.7)$& $\underline{\textbf{66.5}}\pm2.6$ $(+11.8)$& $\underline{\textbf{56.3}}\pm2.2$ $(+25.7)$& $\underline{\textbf{84.3}}\pm3.7$ $(+22.3)$& $\underline{\textbf{81.9}}\pm5.4$ $(+23.0)$ & $\underline{\textbf{79.8}}\pm4.3$ $(+19.8)$\\
             \midrule
             SGC & $28.7\pm1.2$ & $33.7\pm3.5$ & $46.9\pm1.7$& $51.8\pm5.9$& $58.1\pm4.6$ & $58.9\pm6.1$\\
             SGC (GDC) & $34.3\pm0.6$ $(+5.6)$& $60.6\pm1.5$ $(+26.9)$& $51.4\pm1.6$ $(+4.5)$& $53.7\pm5.1$ $(+1.9)$& $56.2\pm3.8$ $(-1.9)$ & $60.3\pm6.3$ $(+1.4)$\\
             SGC (ours) & $\underline{\textbf{34.9}}\pm0.7$ $(+6.2)$& $\underline{\textbf{67.1}}\pm2.9$ $(+33.4)$& $\underline{\textbf{52.3}}\pm2.3$ $(+5.4)$& $\underline{\textbf{77.8}}\pm4.7$ $(+26.0)$& $\underline{\textbf{73.5}}\pm4.3$ $(+15.4)$ & $\underline{\textbf{74.4}}\pm6.0$ $(+15.5)$\\
             \midrule
             APPNP & $35.0\pm1.4$ & $45.3\pm1.6$ &$31.0\pm1.6$ & $81.2\pm2.5$ & $70.3\pm9.3$ & $79.5\pm4.6$\\
             APPNP (GDC) & $35.7\pm0.5$ $(+0.7)$ & $52.3\pm1.4$ $(+7.0)$& $40.5\pm0.8$ $(+9.5)$& $80.2\pm2.4$ $(-1.0)$& $77.8\pm3.5$ $(+7.5)$ & $76.2\pm4.6$ $(-3.3)$\\
             APPNP (ours) & $\textbf{35.9}\pm1.1$ $(+0.9)$& $\underline{\textbf{66.7}}\pm2.7$ $(+21.4)$& $\underline{\textbf{55.9}}\pm2.9$ $(+24.9)$& $\textbf{84.3}\pm4.2$ $(+3.1)$& $\underline{\textbf{81.6}}\pm5.4$ $(+11.3)$ & $\textbf{80.3}\pm4.8$ $(+0.8)$\\
             \midrule
             \midrule
             GPRGNN & $33.4\pm1.4$ & $64.4\pm1.6$ & $41.9\pm2.2$ & $\textbf{85.5}\pm5.0$ & $79.5\pm7.0$ & $\textbf{84.6}\pm4.0$\\
             GPRGNN (GDC) & $\textbf{34.4}\pm1.0$ $(+1.0)$& $61.9\pm1.7$ $(-2.5)$& $39.2\pm1.5$ $(-1.7)$ & $85.1\pm5.0$ $(-0.4)$& $\textbf{82.4}\pm4.7$ $(+2.9)$& $80.8\pm4.9$ $(-3.8)$\\
             GPRGNN (ours) & $34.1\pm1.1$ $(+0.7)$& $\textbf{65.5}\pm2.2$ $(+1.1)$& $\underline{\textbf{47.1}}\pm2.4$ $(+5.2)$& $85.1\pm4.1$ $(-0.4)$& $80.3\pm6.3$ $(+0.8)$& $84.3\pm5.1$ $(-0.3)$\\
             \midrule
             \midrule
             Geom-GCN$^\dagger$ & $31.6$ & $60.9$ & $38.1$ & $64.1$ & $60.8$ & $67.6$\\
             $\text{H}_2\text{GCN}^*$ & $35.9\pm1.0$ & $59.4\pm2.0$ & $37.9\pm2.0$ & $86.7\pm4.7$ & $82.2\pm6.0$ & $84.9\pm6.8$\\
             BernNet & $35.1\pm0.6$ & $62.0\pm2.3^\triangledown$ & $52.6\pm1.7^\triangledown$ & $84.9\pm4.5^\triangledown$ & $80.3\pm5.4^\triangledown$ & $83.2\pm6.5^\triangledown$\\
             PPGNN & $31.4\pm0.8$ & $67.7\pm2.3^\triangledown$ & $56.9\pm1.2^\triangledown$ & $88.2\pm3.3^\triangledown$ & $82.4\pm4.3^\triangledown$ & $89.7\pm4.9^\triangledown$\\\hline
        \end{tabular}
        }
\caption{\label{tab:results}Node classification accuracy. Results marked with $\dagger$, $*$ and $\triangledown$ are obtained from~\citet{Pei2020,DBLP:conf/nips/ZhuYZHAK20,Lingam2021-lo} respectively. Statistically significant results are underlined based on paired T-test of $p<0.01$.} \end{table*}

Propositions $1$, $2$, $4$ and $5$ are easy to prove. We hereby give a brief description for Proposition $3$.

\begin{restatable}[]{lemma}{hdenerdos}
\label{lemma:h_den_erdos}
$\forall{K>1}$, $\mathbb{E}[h_{\text{den}}] = 0.5$ for the Erdos-Renyi random graph $G(n,p)$.
(A proof is given in Appendix.)
\end{restatable}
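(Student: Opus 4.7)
The plan is to reduce the claim to $\E[\hat{h}_{\text{den}}] = 0$, since $h_{\text{den}} = (1 + \hat{h}_{\text{den}})/2$ by definition. The proof then splits into a linearity calculation for the expected intra- and inter-class densities and a concentration argument that carries the equality through the min-max operator defining $\hat{h}_{\text{den}}$.

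First, I would apply linearity of expectation under the Erd\H{o}s--R\'enyi model. Each potential edge is an independent Bernoulli$(p)$, so the expected number of edges in any specified sub-population equals $p$ times the number of admissible pairs. For the inter-class density this gives $\E[d_{kj}] = p$ exactly for every $k \neq j$. For the intra-class density, the no-self-loop convention yields $\E[d_k] = p \cdot (|Y_k|-1)/(|Y_k|+1)$, which equals $p$ up to an $O(1/|Y_k|)$ correction; under a convention that counts self-loops (matching the $|Y_k|(|Y_k|+1)/2$ denominator one-to-one) the equality $\E[d_k] = p$ already holds exactly. Either way, intra- and inter-class densities share the same mean $p$ in the large-class regime.

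Next, I would transfer this pointwise equality through the min and max via concentration. Each $d_k$ and $d_{kj}$ is a normalized sum of i.i.d.\ Bernoulli$(p)$ indicators, so Hoeffding's inequality gives $|d_k - p|$ and $|d_{kj} - p|$ of order $O_p(1/\sqrt{\min_k |Y_k|})$. A union bound over the $K$ classes and $K-1$ neighbouring classes keeps the deviations uniformly small, so $\hat{h}_{\text{den}} = \min_k \bigl( d_k - \max_{j \neq k} d_{kj} \bigr) \to 0$ in probability as class sizes diverge. Since $\hat{h}_{\text{den}} \in [-1,1]$ is bounded, dominated convergence upgrades this to $\E[\hat{h}_{\text{den}}] \to 0$, and hence $\E[h_{\text{den}}] \to 1/2$.

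The main obstacle is the non-commutativity of min and max with expectation, which rules out a pointwise equality for every finite $n$: for fixed $n$, $\E[\max_{j \neq k} d_{kj}]$ strictly exceeds $p$ by a positive variance-driven term, so the finite-$n$ expectation is slightly below $1/2$. The cleanest rigorous reading of the lemma is therefore asymptotic, fixing $K$ and letting every class size grow; equivalently, one can state the claim as $\E[d_k - d_{kj}] = 0$ for each $(k,j)$ and invoke concentration only to conclude that the compositional min-max of these mean-zero quantities is vanishing in the same limit.
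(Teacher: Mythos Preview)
Your approach is more careful than the paper's own proof. The paper simply computes $\E[d_k] = p$ (under the self-loop convention, so that the denominator $|Y_k|(|Y_k|+1)/2$ matches the number of admissible pairs one-to-one and the equality is exact) and $\E[d_{kj}] = p$ by linearity, then substitutes these values straight into the definition of $\hat{h}_{\text{den}}$ to conclude $\E[\hat{h}_{\text{den}}] = 0$. There is no discussion of the non-commutativity of expectation with $\min$ and $\max$ that you flag; the paper implicitly treats $\E[\min_k(d_k - \max_{j\neq k} d_{kj})]$ as $\min_k(\E[d_k] - \max_{j\neq k}\E[d_{kj}])$.

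Your concentration argument via Hoeffding and a union bound is absent from the paper but is precisely what one needs to make the substitution legitimate, and your diagnosis that the finite-$n$ expectation sits strictly below $1/2$ (because $\E[\max_{j\neq k} d_{kj}] > p$) is correct. The paper's surrounding prose already hedges with ``$\approx 0.5$'' in Proposition~3, even though the lemma as stated asserts exact equality; your asymptotic reading is the rigorous one. In short, the paper stops at your first step (linearity on the individual densities), while you supply the missing concentration layer and correctly identify the finite-sample bias.
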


Figure \ref{fig:graph_examples} shows some example graphs with four different metrics. Compared with $h_{\text{node}}$ and $h_{\text{edge}}$, $h_{\text{den}}$ is not sensitive to the number of labels and label imbalance. Compared with $h_{\text{norm}}$, $h_{\text{den}}$ is able to detect neutral graphs. $h_{\text{den}}$ gives scores in the range $(0, 0.5)$ for graphs of low-homophily, allowing direct comparison between them. $h_{\text{den}}$ considers edge density and therefore is robust to disconnectivity.

\section{Empirical results}
\label{sec:emperiments}


\begin{figure}[ht]
\centering
\begin{subfigure}{0.48\columnwidth}
\includegraphics[clip,width=\textwidth]{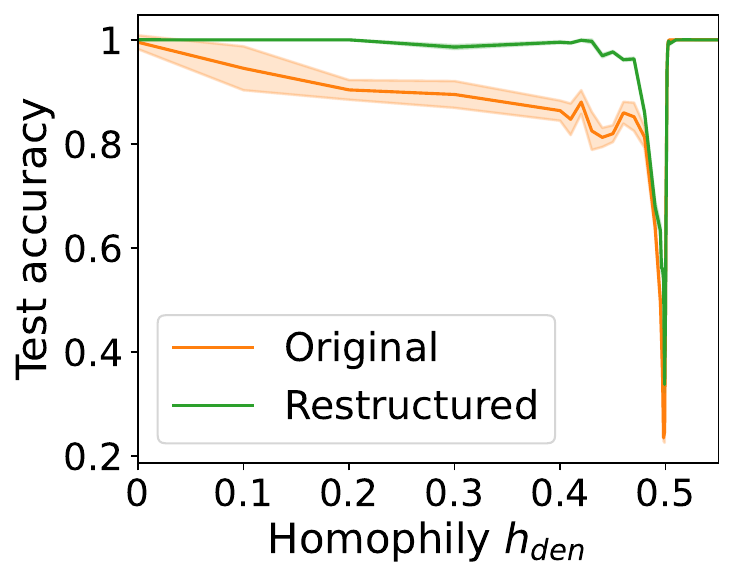}
\subcaption{GCN performance}
\label{fig:syn_cora_gcn}
\end{subfigure}%
\hfill
\begin{subfigure}{0.48\columnwidth}
\includegraphics[clip,width=\textwidth]{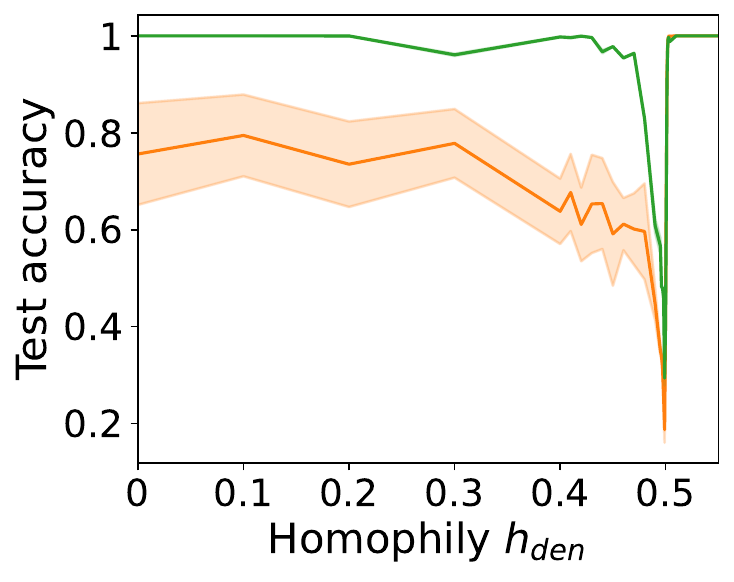}
\subcaption{SGC performance}
\label{fig:syn_cora_sgc}
\end{subfigure}%
\caption{Performance on synthetic datasets.}
\label{fig:syn_results}
\end{figure}

\paragraph{Datasets and models.}
We compare six classical GNNs: GCN~\citep{DBLP:conf/iclr/KipfW17}, SGC~\citep{DBLP:conf/icml/WuSZFYW19}, ChevNet~\citep{DBLP:conf/nips/DefferrardBV16}, ARMANet~\citep{DBLP:journals/corr/abs-1901-01343}, GAT~\citep{DBLP:conf/iclr/VelickovicCCRLB18}, and APPNP~\citep{DBLP:conf/iclr/KlicperaBG19}, on their performance before and after restructuring. We also report the performance using an additional restructuring methods: GDC~\citep{Klicpera2019-ut}. Five recent GNNs that target heterophilic graphs are also listed as baselines: GPRGNN~\citep{chien21}, H\textsubscript{2}GCN~\citep{DBLP:conf/nips/ZhuYZHAK20}, Geom-GCN~\citep{Pei2020}, BernNet~\citep{DBLP:conf/nips/HeWHX21} and PPGNN~\citep{Lingam2021-lo}. We run experiments on six real-world graphs: \texas{}, \cornell{}, \wisconsin{}, \actor{}, \chameleon{} and \squirrel{}~\citep{DBLP:journals/corr/abs-1909-13021, Pei2020}, as well as synthetic graphs of controlled homophily. Details of these datasets are given in Appendix.

\paragraph{Experimental setup.} 
Hyperparameters are tuned using grid search for all models on the unmodified and restructured graphs of each dataset. We record prediction accuracy on the test set averaged over 10 runs with different random initializations. We use the same split setting as~\citet{Pei2020, DBLP:conf/nips/ZhuYZHAK20}. The results are averaged over all splits. We adopt early stopping and record the results from the epoch with highest validation accuracy. We report the averaged accuracy as well as the standard deviation.
For the spectrum slicer in Equation \ref{eqn:slicer_matrix}, we use a set of 20 slicers with $s=40$ and $m=4$ so that the spectrum is sliced into 20 even range of $0.1$. In the restructure step, we add edges gradually and stop at the highest $h_{\text{den}}$ on validation sets before the validation homophily starts to decrease. All experiments are run on a single NVIDIA RTX A6000 48GB GPU unless otherwise noted.

\paragraph{Node classification results.}
Node classification tasks predict labels of nodes based on graph structure and node features. We aim to improve the prediction accuracy of GNN models by restructuring edges via the adaptive SC method, particularly for heterophilic graphs. The evaluation results are shown in Equation \ref{tab:results}. On average, the performance of GNN models is improved by 25\%. Training runtime for each dataset is reported in Appendix.

\begin{figure}[t]
\includegraphics[clip,width=\columnwidth]{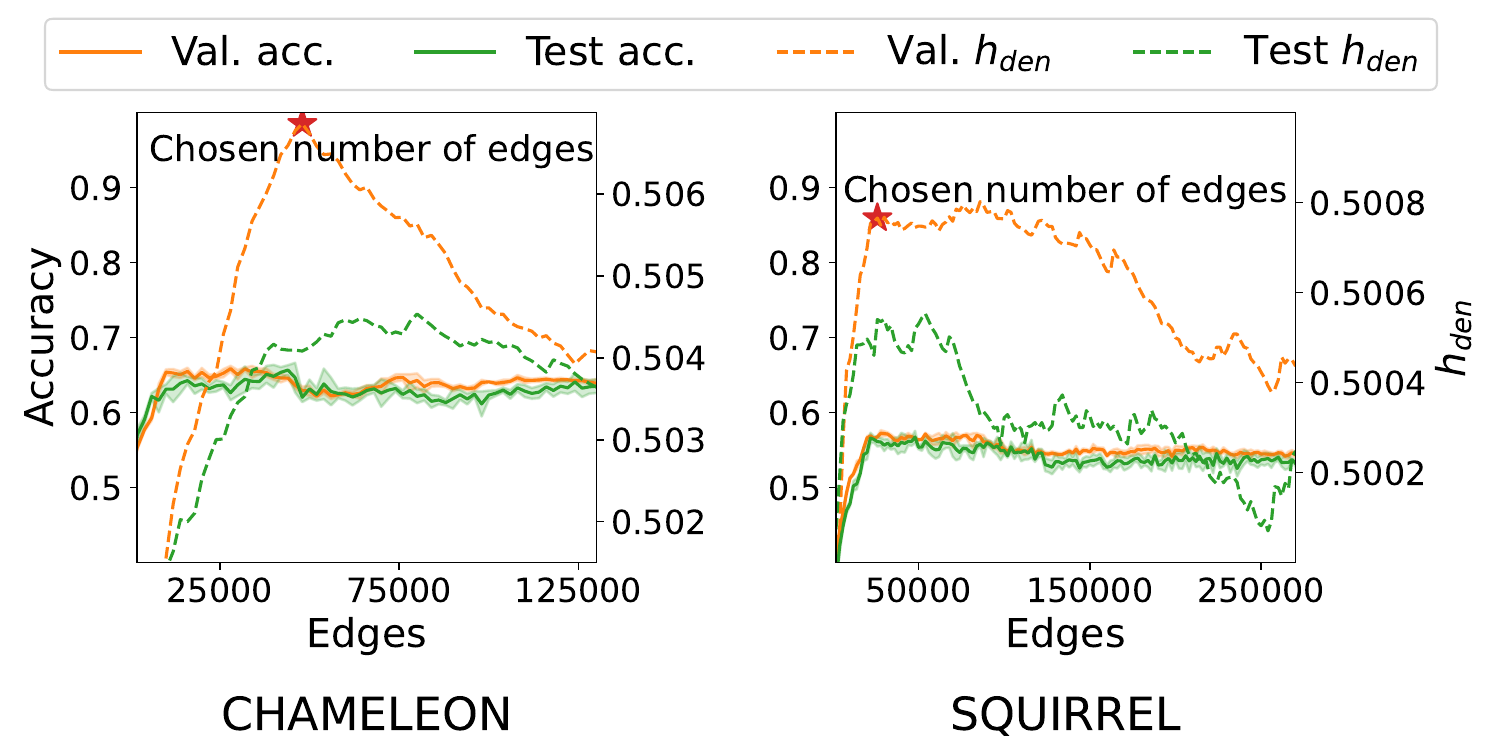}
\caption{Homophily and accuracy of GCN on validation and test sets as per edges numbers. The optimal number of edges are chosen based on $h_\text{den}$ on validation set.}
\label{fig:homo_vs_acc}\vspace{-0.3cm}
\end{figure}

\paragraph{Results on synthetic graphs}
\label{results:synthetic}
We run GCN and SGC on the synthetic dataset of controlled homophily range from $0$ to $1$. The model performance with homophily is plotted in Equation \ref{fig:syn_results}. As expected, higher homophily level corresponds to better performance for both GCN and SGC. All model reaches $100\%$ accuracy where homophily is larger than $0.55$. Performance reaches the lowest where homophily level is around $0.5$. This is where intra-class edge has same density as inter-class edges, hence hard for GNN models to capture useful structural patterns. Complete graphs, disconnected graphs and Erdos-renyi random graphs all fall into this category. When homophily level continues to decrease, performance starts to climb again. In fact, when homophily level reaches around $0$, GNN models are able to perform as good as when homophily level is high. An explanation is, in such low-homophily cases, a node $v$ aggregates features from nodes of other labels except its own, this aggregated information forms a useful pattern by itself for predicting the label of $v$, without needing information from nodes of the same label. Related observations are also reported by~\citet{Luan2021-xl, ma2022}. This shows traditional GNNs are actually able to capture the "opposite attract" pattern in extreme heterophiic graphs. In less-homophilic graphs where the homophily is less $0.5$, our adaptive spectral clustering method is able to lift homophily to a very high level, leading to boosted performance. We also note that the performance variance on the rewired graph is much lower than these on the original graph. We detail the synthetic data generation process in Appendix.

\paragraph{Restructuring based on homophily.}
As homophily and performance are correlated, in the restructuring process, number of edges are chosen based on homophily level on the validation set. As shown in Equation \ref{fig:homo_vs_acc}, we chose $48000$ edges for \chameleon{} and $26000$ edges for \squirrel{}, each corresponds to the first peak of homophily on their validation set. A limitation of this method is it relies on a balanced data split, i.e. if the homophily on validation set does not represent homophily on the whole dataset, the restructured graph may not yield a better homophily and performance.


\section{Related Work}
\smallskip
\noindent\textbf{GNNs for heterophilic graphs.~}
Early GNNs assume homophily implicitly. Such an inductive bias results in a degenerated performance on less-homophilic graphs~\citep{Lim2021-qz}. Recently, homophily is also shown to be an effective measure of a graph's robustness to both over-smoothing and adversarial attacks. Node representations converge to a stationary state of similar values (``over-smoothed'') as a GNN goes deeper. A graph of low homophily is also more prone to this issue as the stationary state is reached with fewer GNN layers~\citep{yan2021two}. For the same reason, homophilic graphs are more resilient to a graph injection attack than their heterophilic counterparts. Some techniques defend against such attacks by improving or retaining homophily under graph injection~\citep{Zhu2021-lf,chen2021}.
\citet{Pei2020} firstly draw attention to the limitation of GNN on less-homophilic graphs. 
Since then, various GNNs have been proposed to improve performance on these graphs. H\textsubscript{2}GCN~\citep{DBLP:conf/nips/ZhuYZHAK20} show that proper utilization of ego-embedding, higher-order neighbourhoods, and intermediate embeddings can improve results in heterophilic graphs. A recent scalable model, LINKX~\citep{Lim2021-qz}, shows separating feature and structural embedding improves performance. \citet{kim21howto} study this topic specifically for graph attention and finds improvements when an attention mechanism is chosen according to homophily and average degrees. \citet{chien21} propose to use a generalized PageRank method that learns the weights of a polynomial filter and show that the model can adapt to both homophilic and heterophilic graphs. Similarly, \citet{Li2021-uj} use learnable spectral filters for achieving an adaptive model on graphs of different homophilic levels. \citet{zhu2021graph} recently propose to incorporate a learnable compatibility matrix to handle heterophily of graphs. \citet{ma2022} and \citet{Luan2021-xl} found that when node neighbourhoods meet some condition, heterophily does not harm GNN performance. However, \citep{DBLP:conf/kdd/SureshBNLM21} shows real-world graphs often exhibit mixing patterns of feature and neighbourhoods proximity so such "good" heterophily is rare.

\smallskip
\noindent\textbf{Adaptive spectral clustering.~}
From the deep learning perspective, most previous studies about spectral clustering aim at clustering using learning approaches, or building relations between a supervised learning model and spectral clustering.
\citet{Law_2017-vb} and \citet{Bach_undated-ln} reveal that minimizing the loss of node similarity matrix can be seen as learning the leading eigenvector representations used for spectral clustering. \citet{Bianchi2019-qj} train cluster assignment using a similarity matrix and further use the learned clusters in a pooling layer to for GNNs. Instead of directly optimizing a similarity matrix, \citet{Tian2014-ft} adopt an unsupervised autoencoder design that uses node representation learned in hidden layers to perform K-means. 
\citet{Chowdhury2020-pe} show that Gromov-Wasserstein learning, an optimal transport-based clustering method, is related to a two-way spectral clustering. 

\smallskip
\noindent\textbf{Graph restructuring and rewiring.~} GDC~\citep{Klicpera2019-ut} is one of the first works propose to rewire edges in a graph. It uses diffusion kernels, such as heat kernel and personalized PageRank, to redirect messages passing beyond direct neighbours. \citet{Chamberlain2021-tw} and \citet{eliasof2021pde} extend the diffusion kernels to different classes of partial differential equations. \citet{topping2022} studies the ``over-squashing" issue on GNNs from a geometric perspective and alleviates the issue by rewiring graphs. In a slightly different setting where graph structures are not readily available, some try to construct graphs from scratch instead of modifying the existing edges. \citet{Fatemi2021-is} construct homophilic graphs via self-supervision on masked node features. \citet{Kalofolias2016-wy} learns a smooth graph by minimizing $tr(\mX^T\mL\mX)$. These works belong to the "learning graphs from data" family~\citep{Dong2019-cc} which is relevant but different to our work because \begin{enumerate*}[label=\emph{\alph*})]
  \item these methods infer graphs from data where no graph topology is readily available, while in our setting, the original graph is a key ingredient;
  \item as shown by~\citet{Fatemi2021-is}, without the initial graph input, the performance of these methods are not comparable to even a naive GCN;
  \item these methods are mostly used to solve graph generation problems instead of node- or link-level tasks.
\end{enumerate*}

\section{Conclusion}
We propose an approach to enhance GNN performance on less-homophilic graphs by restructuring the graph to maximize homophily. Our method is inspired and closely related to Spectral Clustering (SC). It extends SC beyond the leading eigenvalues and learns the frequencies that are best suited to cluster a graph. To achieve this, we use rectangular spectral filters expressed in the Neumann series to slice the graph spectrum into chunks that umbrellas small ranges of frequency. We also proposed a new homophily metric that is density-aware, and robust to label imbalance, hence a better homophily indicator for the purpose of graph restructuring. There are many promising extensions of this work, such as using it to guard against over-smoothing and adversarial attacks, by monitoring changes in homophily and adopting tactics to maintain it. We hereby leave these as future work.

\section*{Acknowledgement}
This work was partly supported by Institute of Information \& communications Technology Planning \& Evaluation (IITP) grant funded by the Korea government (MSIT) (No.2019-0-01906, Artificial Intelligence Graduate School Program(POSTECH)) and National Research Foundation of Korea (NRF) grant funded by the Korea government (MSIT) (NRF-2021R1C1C1011375). Dongwoo Kim is the corresponding author.

\newpage
\section*{Appendix}

\medskip

\subsection{A. Johnson-Lindenstrauss Theorem}
Below, we present the Johnson-Lindenstrauss
Theorem by \citet{Dasgupta1999-va}. For the proof, we refer the readers to the original paper of \citet{Dasgupta1999-va}.
\label{appendix:johnson_lindenstrauss}
\begin{theorem}[Johnson-Lindenstrauss
\label{thm:johnson-lindenstrauss}
Theorem]
For any $0<\epsilon<1$ and any integer $n$, let $k$ be a positive integer such that
\begin{align}
    k \leq 4(\epsilon^2/2 - e^3/3)^{-1}\ln{n}
\end{align}
Then for any set $S$ of $n$ points in $\mathbb{R}^d$, there is a map $f$: $\mathbb{R}^d \longrightarrow \mathbb{R}^k$ such that for all $u,v\in S$,
\begin{align}
\label{eqn:johnson-lindenstrauss}
    (1-\epsilon)\lVert u - v \rVert^2 \leq \lVert f(u) - f(v) \rVert^2 \leq (1+\epsilon)\lVert u - v \rVert^2.
\end{align}
Furthermore, this map can be found in randomized polynomial time.
\end{theorem}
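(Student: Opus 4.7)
The plan is to prove the Johnson--Lindenstrauss theorem via the standard random-projection argument: exhibit a random linear map whose expected behavior is correct on any fixed pair, prove sharp concentration so that the probability of distortion outside $[1-\epsilon,1+\epsilon]$ on a fixed pair is exponentially small in $k$, and then union bound over the $\binom{n}{2}$ pairs in $S$ to conclude that a good map exists (in fact, can be sampled in randomized polynomial time).

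Concretely, I would define $f(x) = \tfrac{1}{\sqrt{k}} A x$ where $A \in \mathbb{R}^{k\times d}$ has i.i.d.\ $\mathcal{N}(0,1)$ entries. Fix any $u,v \in S$ and set $w = u-v$ (w.l.o.g.\ $w \neq 0$). Since each row $a_i$ of $A$ satisfies $\langle a_i, w\rangle \sim \mathcal{N}(0, \|w\|^2)$ independently, the random variable
\begin{equation}
Z = \frac{\|f(w)\|^2}{\|w\|^2} = \frac{1}{k}\sum_{i=1}^{k} \left(\frac{\langle a_i, w\rangle}{\|w\|}\right)^2
\end{equation}
is $\tfrac{1}{k}$ times a $\chi^2_k$ variable, so $\mathbb{E}[Z] = 1$. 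The key step is to prove the two-sided deviation bound
\begin{equation}
\Pr\bigl(|Z-1| > \epsilon\bigr) \;\leq\; 2\exp\!\left(-\tfrac{k}{2}\bigl(\epsilon^2/2 - \epsilon^3/3\bigr)\right),
\end{equation}
which is the heart of the argument and contains the constants appearing in the theorem's bound on $k$.

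To obtain this tail inequality, I would use the Chernoff/moment-generating-function method on $\chi^2_k$: for $0<t<1/2$, $\mathbb{E}[\exp(t \chi^2_k)] = (1-2t)^{-k/2}$, and optimizing $t$ after substituting into Markov's inequality yields $\Pr(Z > 1+\epsilon) \le \exp(-k(\epsilon - \ln(1+\epsilon))/2)$ and analogously for the lower tail. A Taylor expansion $\epsilon - \ln(1+\epsilon) \ge \epsilon^2/2 - \epsilon^3/3$ (valid for $0<\epsilon<1$) gives the stated form. This analytic inequality controlling $\epsilon - \ln(1+\epsilon)$ is the technical bottleneck; everything else is either bookkeeping or standard Gaussian calculus.

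With the per-pair bound in hand, I would apply a union bound over the at most $\binom{n}{2} < n^2$ distinct pairs $(u,v)\in S\times S$:
\begin{equation}
\Pr\bigl(\exists\, u,v \in S:\ |Z_{u,v}-1| > \epsilon\bigr) \;\leq\; n^2 \cdot 2 \exp\!\left(-\tfrac{k}{2}(\epsilon^2/2 - \epsilon^3/3)\right).
\end{equation}
Choosing $k \ge 4(\epsilon^2/2 - \epsilon^3/3)^{-1}\ln n$ (as in the theorem) makes this probability strictly less than $1$, so a deterministic $A$ satisfying \eqref{eqn:johnson-lindenstrauss} simultaneously for every pair must exist. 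Because a random $A$ succeeds with constant positive probability, repeating the sampling a constant number of times and verifying the pairwise distortions in $O(n^2 k d)$ time yields the claimed randomized polynomial-time construction, completing the proof.
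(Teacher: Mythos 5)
The paper does not actually prove this theorem: it is imported verbatim from \citet{Dasgupta1999-va}, and the appendix explicitly refers the reader to that paper, only noting as a side result that a Gaussian random projection realizes the map. Your argument is essentially that standard Dasgupta--Gupta route (Gaussian projection, $\chi^2$ moment-generating-function tail bound with the $\epsilon^2/2-\epsilon^3/3$ constant, union bound over pairs), so in spirit you have reproduced the cited proof rather than diverged from anything in the paper; you also silently corrected the typos in the statement as printed ($k\leq$ should be $k\geq$, and $e^3/3$ should be $\epsilon^3/3$), which is the right reading.

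One piece of bookkeeping at the end needs fixing. With $k \ge 4(\epsilon^2/2-\epsilon^3/3)^{-1}\ln n$ your per-pair failure probability is $2\exp\bigl(-\tfrac{k}{2}(\epsilon^2/2-\epsilon^3/3)\bigr) \le 2n^{-2}$, so bounding the number of pairs by $n^2$ gives a total failure probability of at most $2$, which is vacuous. You must use $\binom{n}{2}=n(n-1)/2$ so the factor $2$ cancels and the failure probability is at most $1-1/n<1$. This also means the success probability of a single draw is only guaranteed to be at least $1/n$, not a constant, so your claim that ``a random $A$ succeeds with constant positive probability'' and hence a constant number of resamplings suffice is not supported; the correct conclusion (and the one in Dasgupta--Gupta) is that an expected $O(n)$ resamplings, each verifiable in $O(n^2 k d)$ time, suffice, which still yields the randomized polynomial-time construction asserted in the theorem. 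With that adjustment your proof is complete and matches the source the paper relies on.
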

 In this work, we adopt a side result of their proof, which shows the projection $f = \mR'x$ is an instance of such a mapping that satisfies \cref{eqn:johnson-lindenstrauss}, where $\mR'$ is a matrix of random Gaussian variables with zero mean. \citet{Achlioptas2003-cn} further extends the proof and shows that a random matrix drawn from $[1,0,-1]$, or $[1,-1]$ also satisfies the theorem. We leave these projections for future study.

\subsection{B. Proofs of Lemma 1 and Lemma 2}
\label{appendix:proofs}
We recall \cref{lemma:neumann}:

\newmannlemma*

\begin{proof}[]
\label{appendix:neumann_proof}
With a slight abuse of notation, we use $\lambda(*)={\lambda_1, \lambda_2,...}$ to denote eigenvalues of a matrix. The spectral radius of a matrix is the largest absolute value of its eigenvalues $\rho(*)=\max\left(|\lambda\left(*\right)|\right)$.
$\mL$ is the normalized Laplacian matrix of $\gG$, therefore $0\leq \lambda(\mL)\leq 2$. According to eigenvalue properties, we have 
$$-1 < \frac{\lambda(\mL) - a}{2+\hat{\epsilon}}<1,$$
thus
$$-1<\lambda\left(\frac{\mL-a\mI}{2+\hat{\epsilon}}\right)<1$$ 
because $\hat{\epsilon}>0$ and $a\in[0,2]$. Because the power of the eigenvalues of a matrix is the eigenvalues of the matrix power, i.e. $\lambda(*^{2m})=\lambda(*)^{2m}$, we have
$$0<\lambda\left(\left(\frac{\mL-a\mI}{2+\hat{\epsilon}}\right)^{2m}\right) <\lambda\left(\frac{\mL-a\mI}{2+\hat{\epsilon}}\right)< \frac{2}{2+\hat{\epsilon}}.$$
Therefore,
$$\frac{1}{s^{2m}}<\lambda\left( \left(\frac{\mL-a\mI}{2+\hat{\epsilon}}\right)^{2m} + \frac{\mI}{s^{2m}}\right) < \frac{2}{2+\hat{\epsilon}} + \frac{1}{s^{2m}}.$$ 
Hence, $\forall \hat{\epsilon} > \frac{2s^{2m}}{s^{2m}-1}-2$, 
$$0<\lambda\left(\left(\frac{\mL-a\mI}{2+\hat{\epsilon}}\right)^{2m} + \frac{\mI}{s^{2m}}\right)<1.$$
In another word, $\rho(\mI-\mT)<1$.
Gelfand's formula shows that if $\rho(\mI - \mT) < 1$, then $\lim_{p\to\infty}(\mI -\mT)^p=0$ and the inverse of $\mT$ can be expressed by a Neumann series $\mT^{-1} = \sum_{p=0}^\infty(\mI - \mT)^p$.
\end{proof}

We recall \cref{lemma:h_den_erdos}:
\hdenerdos*

\begin{proof}[]
\label{appendix:h_den_erdos_proof}
For each node label $k$ of $|Y_k|$ nodes, the are at most $\binom{|Y_k| + 1}{2} = |Y_k|(|Y_k|+1)/2$ intra-class edges (including self-loops). For each pair of label $(k,j)$, there are at most $|Y_k||Y_j|$ inter-class edges. On average $G(n,p)$ has $\binom{n+1}{2}p$ edges, among which $\binom{|Y_k| + 1}{2}p=\frac{|Y_k|(|Y_k|+1)p}{2}$ are intra-class for $k$, and $|Y_k||Y_j|p$ are inter-class for the class pair $(k,j)$. Hence from \cref{eqn:d_k} we have $\mathbb{E}[d_k] = p$ , from \cref{eqn:d_kj} we have $\mathbb{E}[d_{kj}] = p$. Substitute $d_k$ and $d_{kj}$ in \cref{eqn:h_hat_den} we have $\mathbb{E}[\hat{d}_{\text{den}}] = 0$ and $\mathbb{E}[d_{\text{den}}] = 0.5$.
\end{proof}

\subsection{C. Spectral Clustering}
\label{appendix:sc}
A simplified Spectral Clustering (SC) algorithm involves the following four steps:
\begin{enumerate}
    \item Perform eigendecomposition for the Laplacian matrix to obtain eigenvalues $(\lambda_1, \lambda_2, ..., \lambda_N)$ sorted in ascending order.
    \item Pick $L$ ($1 < L \leq N$) eigenvectors $\vu_1, ..., \vu_L$ associated with the leading $L$ eigenvalues.
    \item Represent a node $i$ with a vector $\vf_i$ whose element are from the chosen eigenvectors: $\vf_{i} =\left[u_1(i), u_2(i), ..., u_L(i)\right]^T\in \mathbb{R}^L$. 
    \item Perform K-means with a distance measurement, such as the Euclidean distance $||\vf_{i} - \vf_{j}||$ or dot product similarity $\vf_{i}^T \vf_{j}$, to partition the nodes into $K$ clusters. 
\end{enumerate}

\subsection{D. Graph Restructuring Algorithm}
\label{appendix:algorithm}
The proposed graph restructuring algorithm is illustrated in \cref{alg:reconstruction}.

\begin{algorithm}[ht]
   \caption{Graph Restructuring Algorithm}
   \label{alg:reconstruction}
\begin{algorithmic}
   \STATE {\bfseries Input:} Graph $G$, homophily metric $h$, number of random sample $P$, spectrum band length $s$, edge increment number $n$
   \STATE {\bfseries Output:} Restructured adjacency matrix $\hat{\mA}$ 
   \STATE
   \STATE Sample $\mR \sim \mathcal{N}(0, \frac{1}{P\mathbf{I}})$
   \FOR{each spectrum band $(s,a)$}
   \STATE $\bm{\Gamma}_{s,a} \leftarrow \hat{g}_{s,a}(\mL)(\mR \mathbin\frown \mX)$ \hfill $\triangleright$ \cref{eqn:filtered}
   \ENDFOR
   \STATE $\Theta \leftarrow \argmin_\Theta \mathcal{L}(\Theta)$    \hfill $\triangleright$ \cref{eqn:objective}
   \STATE $\mH \leftarrow \Theta(\bm{\Gamma})$
   \STATE Compute $\mD'$ where $\mD'_{ij}=\lVert H_{i\cdot} - H_{j\cdot}\rVert$
   \STATE $\pi \leftarrow $ sorted index of $\mD'$ in descending order
   \STATE \hfill $\triangleright$ only use the lower triangular part
   \STATE $\mA' \leftarrow \mathbf{0}$ \hfill  $\triangleright$ empty matrix
   \STATE $\lambda, \lambda^{\text{old}} \leftarrow 0.5$ 
   \WHILE{$\lambda \geq \lambda^{\text{old}}$}
   \STATE $\lambda^{\text{old}} \leftarrow \lambda$ \hfill  $\triangleright$ keep the old homophily score
   \STATE $(i_1,j_1), ..., (i_{n}, j_{n}) \leftarrow  \text{pop}(\pi, n)$
   \STATE \hfill  $\triangleright$ next $n$ edges from sorted index
   \STATE $\mA'_{i_1j_1} \leftarrow 1, ..., \mA'_{i_nj_n} \leftarrow 1$ \hfill  $\triangleright$ add new edge
   \STATE $\lambda \leftarrow h(\mA')$ \hfill  $\triangleright$ compute the new homophily score
   \ENDWHILE
   \STATE \textbf{return} $\mA' + \mA'^\top$ \hfill  $\triangleright$ make sure graph is undirected
\end{algorithmic}
\end{algorithm}



\subsection{E. Spectral Expressive Power}
\label{appendix:spectral_expressive}

In this section, we analyze the ability of the adaptive spectral clustering to learn specific frequency patterns. Being able to adjust to different frequency patterns, to an extent, demonstrates the express power of a model in the spectral domain. As pointed out by \citet{DBLP:conf/iclr/BalcilarRHGAH21}, the majority of GNNs are limited to only low-pass filters and thus have limited expressive power, while only a few are able to capture high-pass and band-pass patterns.

\begin{figure}[ht]
\centering
\begin{subfigure}{.45\columnwidth}
\includegraphics[clip,width=\textwidth]{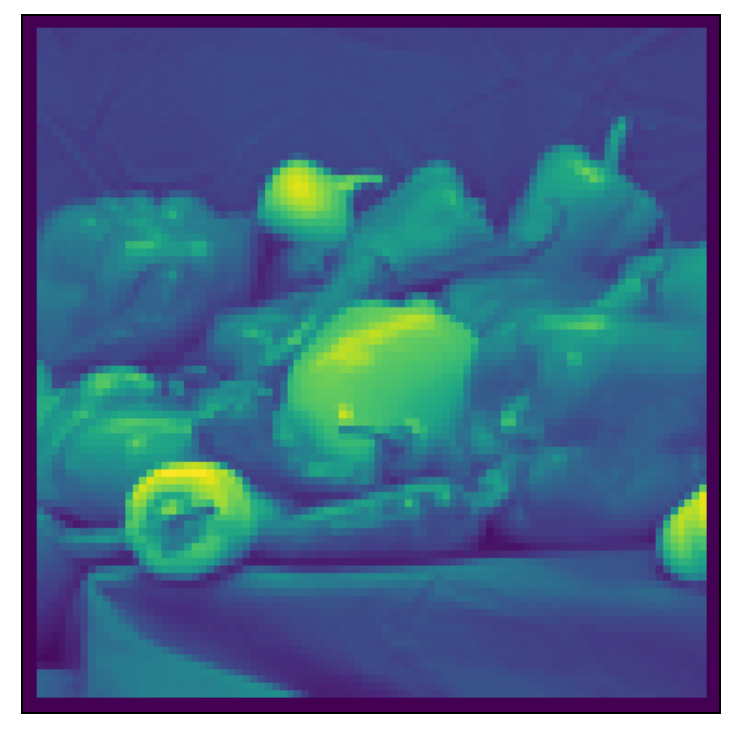}
\subcaption{Original image} 
\label{subfig:original}
\end{subfigure}
\begin{subfigure}{.45\columnwidth}
\includegraphics[clip,width=\textwidth]{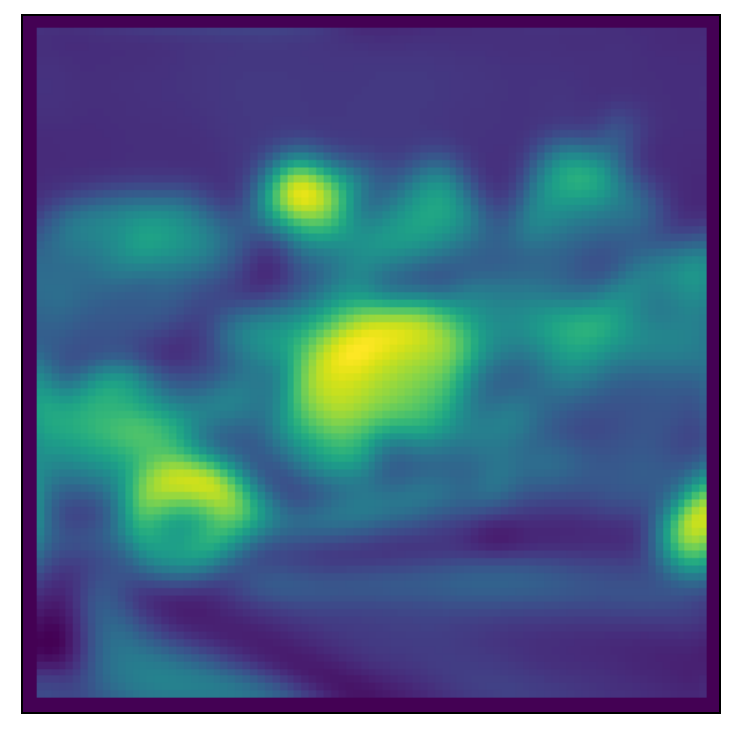}
\subcaption{Low-pass image} 
\label{subfig:low-pass}
\end{subfigure}
\\
\begin{subfigure}{.45\columnwidth}
\includegraphics[clip,width=\textwidth]{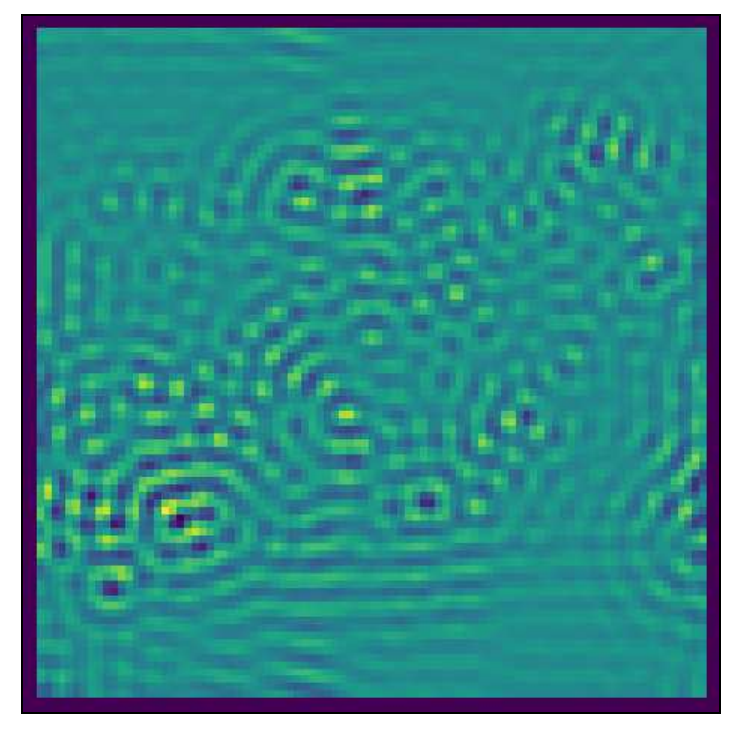}
\subcaption{Band-pass image}
\label{subfig:band-pass}
\end{subfigure}
\begin{subfigure}{.45\columnwidth}
\includegraphics[clip,width=\textwidth]{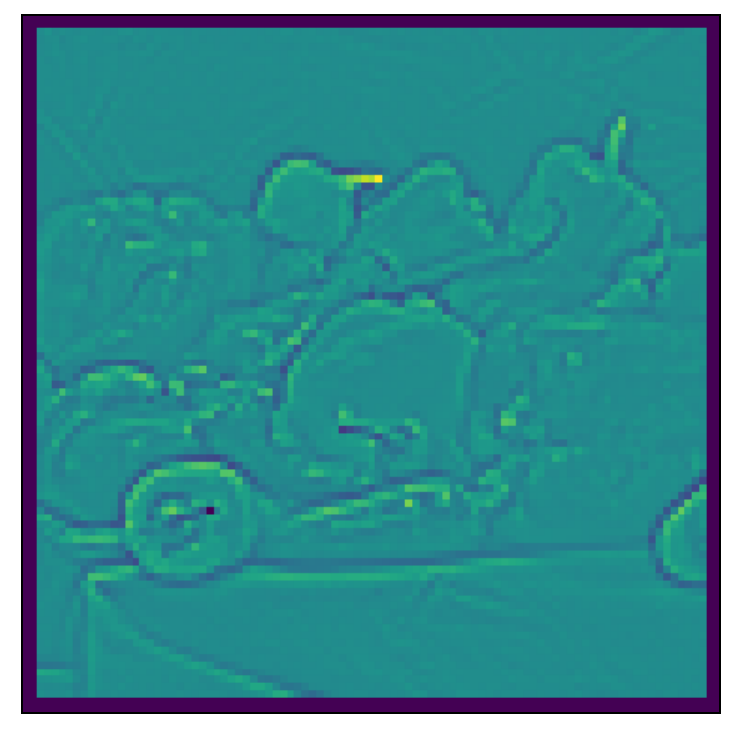}
\subcaption{High-pass image}
\label{subfig:high-pass}
\end{subfigure}
\caption{The original image and the filtered output using low-pass, band-pass and high-pass filters.}
\label{fig:img_filters}
\end{figure}

To evaluate this, we adopt the experimental setup of \citet{DBLP:conf/iclr/BalcilarRHGAH21} using filtered images. A real 100x100 image is filtered by three pre-defined low-pass, band-pass and high-pass filters: $\phi_1(\rho) = \exp{-100\rho^2}$, $\phi_2(\rho) = \exp{-1000(\rho-0.5)^2}$ and $\phi_3(\rho) = 1 - \exp{-10\rho^2}$, where $\rho=\rho_1^2 + \rho_2^2$ and $\rho_1$ and $\rho_2$ are the normalized frequencies in each direction of an image. The original image and the three filtered versions are shown in \cref{fig:img_filters}. The task is framed as a node regression problem, where we minimize the square error between $\mH\in\mathbb{R}^{N\times 1}$ in \cref{eqn:node_embedding} and the target pixel values, i.e.
\begin{align*}
\mathcal{L}'(\Theta) = \sum_{i=1}^N(\mH_i - \mY_i)^2
\end{align*}
where $\mY_i$ is the target pixel value of node $i$. We train the models with 3000 iterations and stop early if the loss is not improving in 100 consecutive epochs.

\begin{table}[ht]
\centering
    \resizebox{\columnwidth}{!}{
        \begin{tabular}{lcccccc}
        \toprule
        Task & MLP & GCN & GIN & GAT & ChevNet & Ours \\
        \midrule
        Low-pass & $43.42$ & $5.79$& $1.44$ & $2.30$ & $0.17$ & $0.07$\\
        Band-pass & $71.81$ & $74.31$ & $46.80$ & $74.04$ & $27.70$ & $2.94$\\
        High-pass & $19.95$ & $24.74$ & $17.80$ & $24.57$ & $2.16$ & $1.36$\\
        \bottomrule
        \end{tabular}
    }
\caption{Sum of squared errors}
\label{tab:square_loss}
\end{table}

\cref{tab:square_loss} shows the square loss of our method along MLP and 2 GNNs. Our method consistently outperforms other models. Some output images are shown in \cref{fig:filtered_img}. As expected, MLP fails to learn the frequency pattern across all three categories. GCN, GAT and GIN are able to learn the low-pass pattern but failed in learning the band and high-frequency patterns. Although ChevNet shows comparable results in the high-pass task, it is achieved with 41,537 trainable parameters while our method only requires 2,050 parameters. Lastly, our method is the only one that can learn and accurately resemble the band-pass image, demonstrating a better flexibility in learning frequency patterns.

\begin{figure}[ht]
\centering
\begin{subfigure}{.24\columnwidth}
\includegraphics[clip,width=\columnwidth]{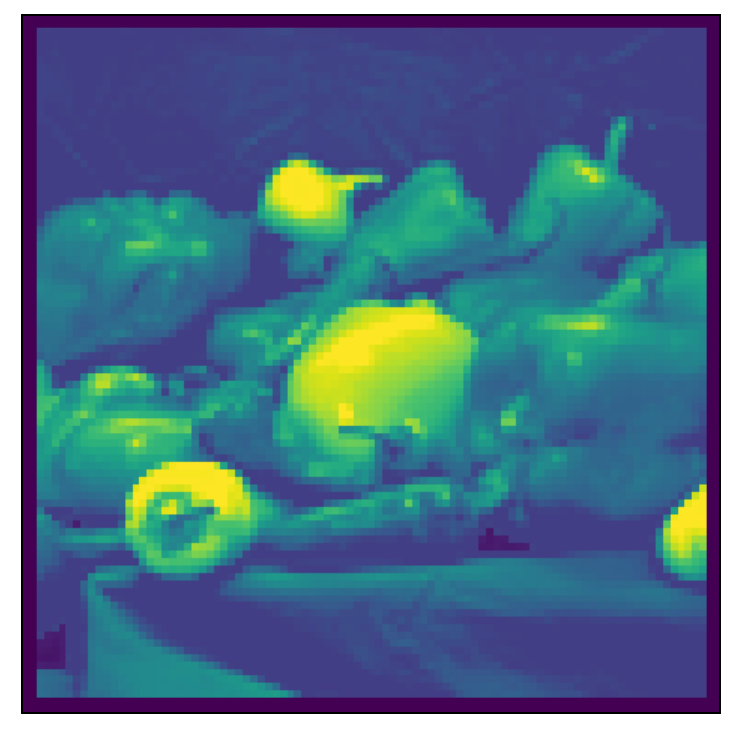}
\subcaption{Low-pass task: MLP} 
\label{subfig:low-pass-gcn}
\end{subfigure}
\hfill
\begin{subfigure}{.24\columnwidth}
\includegraphics[clip,width=\columnwidth]{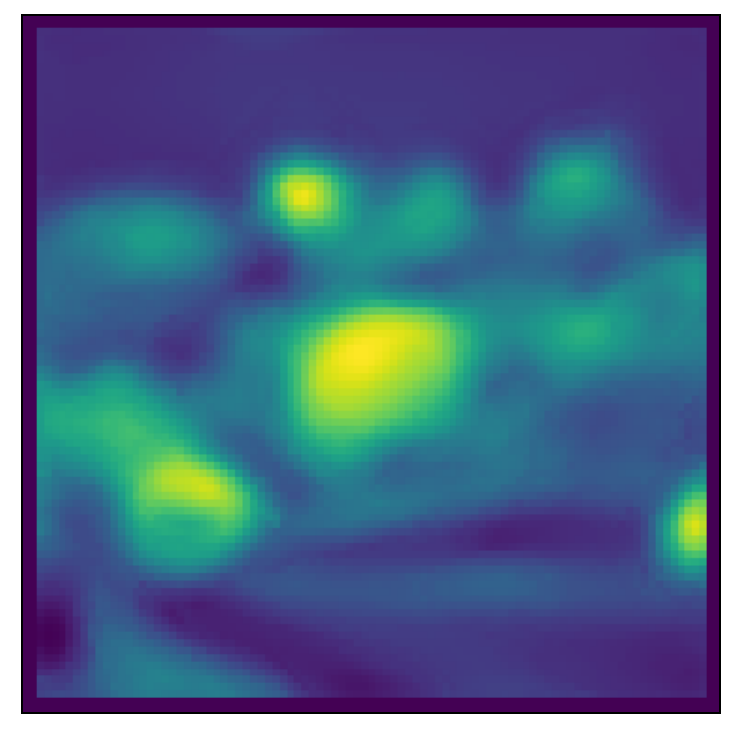}
\subcaption{Low-pass task: GIN} 
\label{subfig:low-pass-gin}
\end{subfigure}
\hfill
\begin{subfigure}{.24\columnwidth}
\includegraphics[clip,width=\columnwidth]{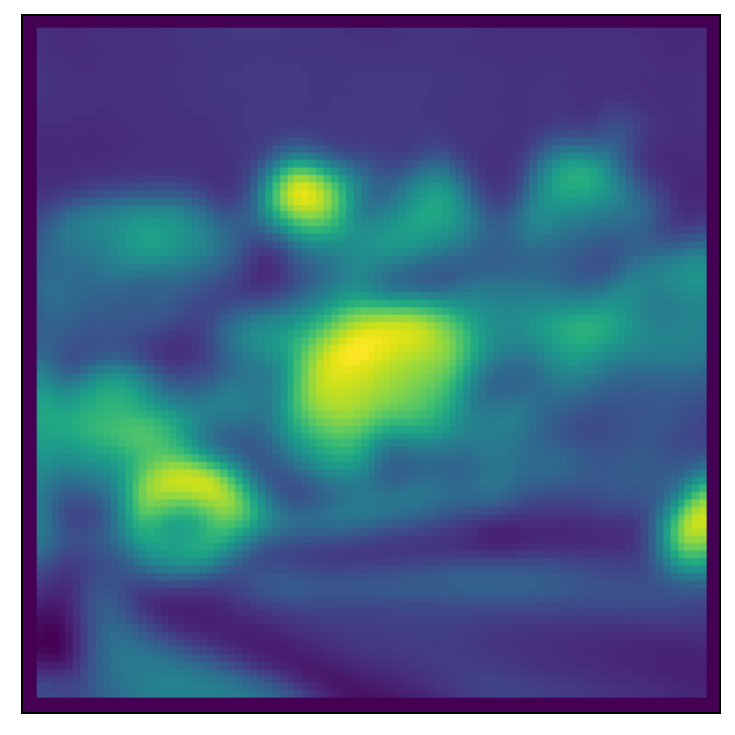}
\subcaption{Low-pass task: ChevNet} 
\label{subfig:low-pass-chev}
\end{subfigure}
\hfill
\begin{subfigure}{.24\columnwidth}
\includegraphics[clip,width=\columnwidth]{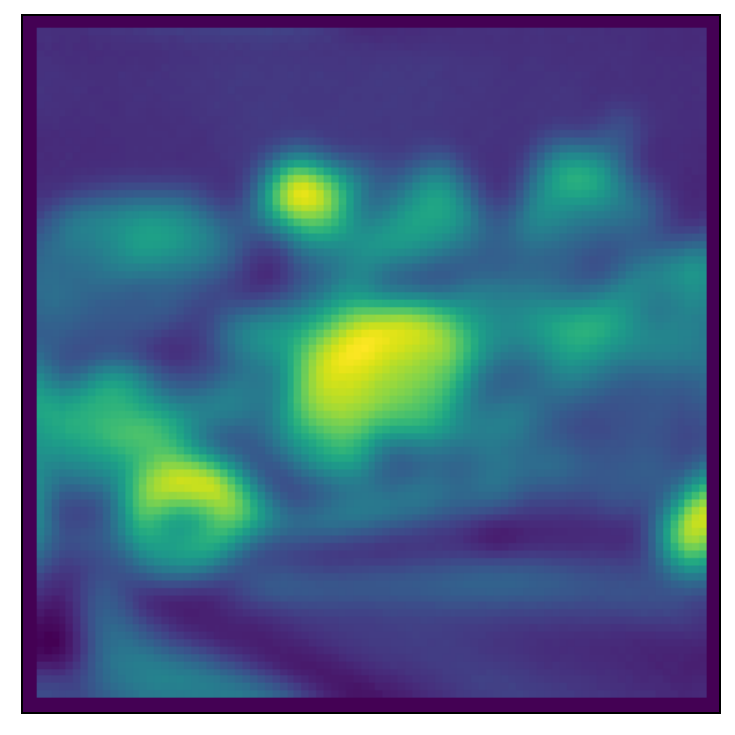}
\subcaption{Low-pass task: ours} 
\label{subfig:low-pass-slicers}
\end{subfigure}
\\
\begin{subfigure}{.24\columnwidth}
\includegraphics[clip,width=\columnwidth]{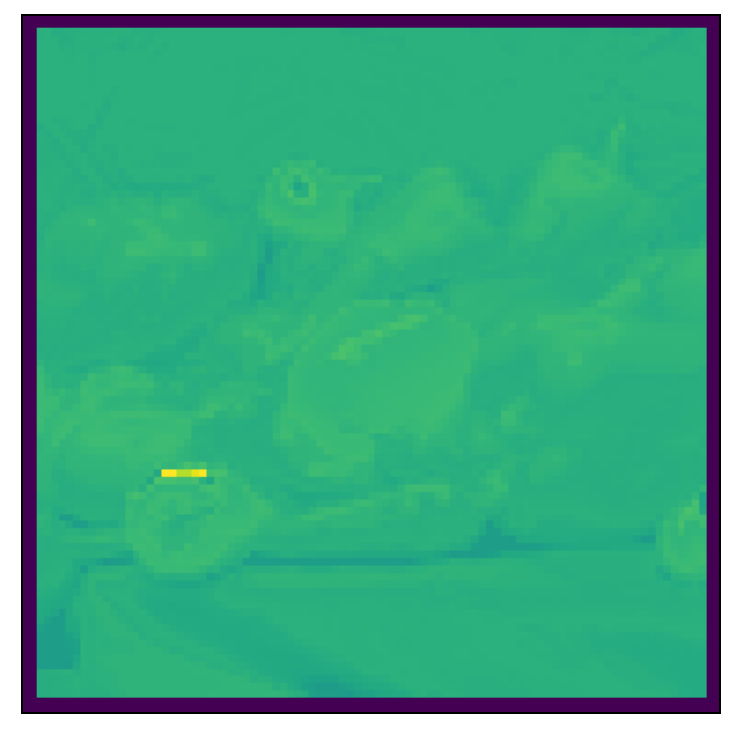}
\subcaption{Band-pass task: MLP} 
\label{subfig:band-pass-gcn}
\end{subfigure}
\hfill
\begin{subfigure}{.24\columnwidth}
\includegraphics[clip,width=\columnwidth]{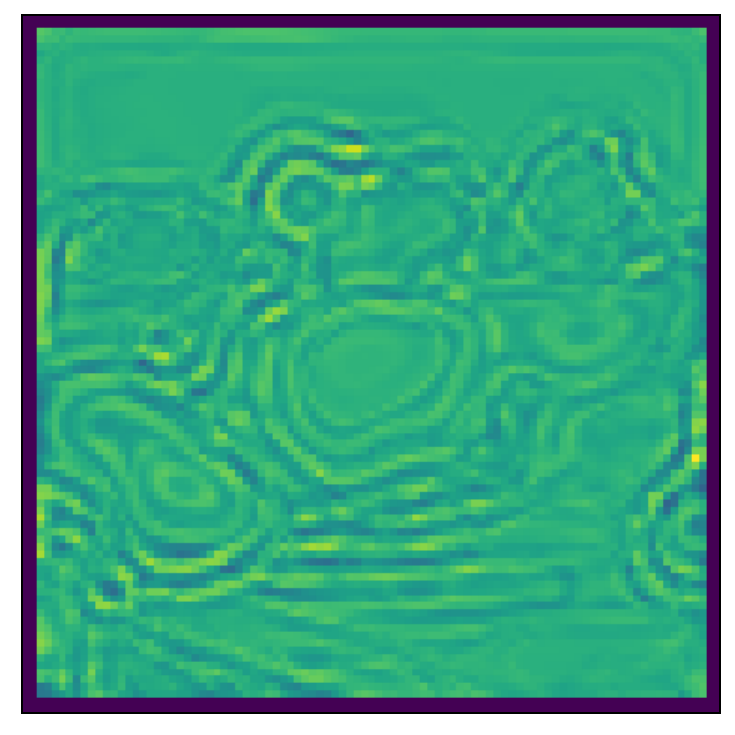}
\subcaption{Band-pass task: GIN} 
\label{subfig:band-pass-gin}
\end{subfigure}
\hfill
\begin{subfigure}{.24\columnwidth}
\includegraphics[clip,width=\columnwidth]{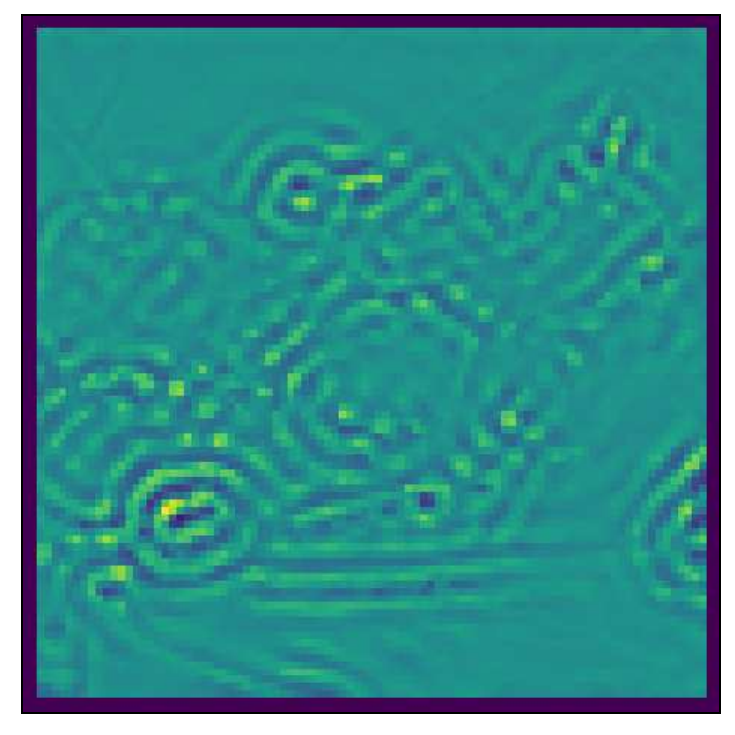}
\subcaption{Band-pass task: ChevNet} 
\label{subfig:band-pass-chev}
\end{subfigure}
\hfill
\begin{subfigure}{.24\columnwidth}
\includegraphics[clip,width=\columnwidth]{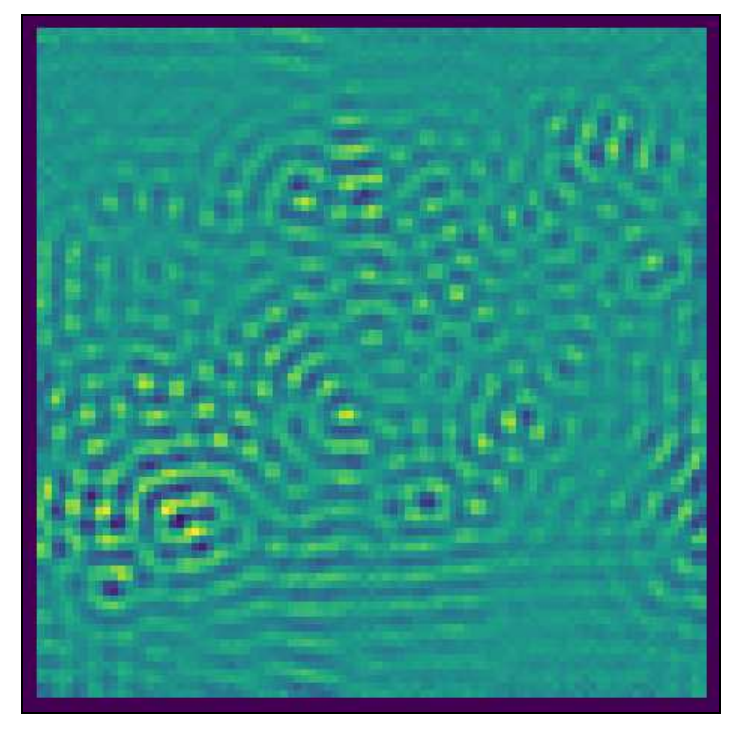}
\subcaption{Band-pass task: ours} 
\label{subfig:band-pass-slicers}
\end{subfigure}
\\
\begin{subfigure}{.24\columnwidth}
\includegraphics[clip,width=\columnwidth]{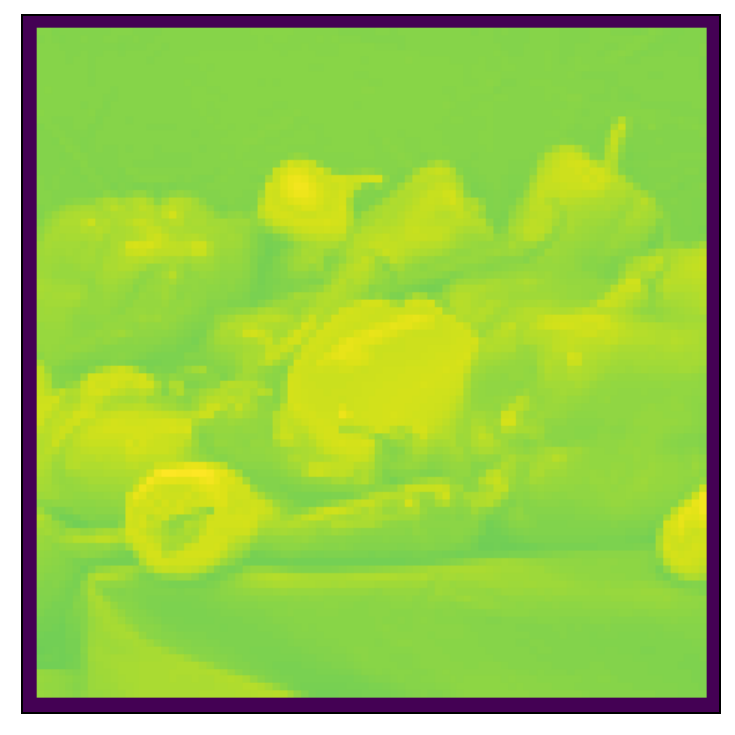}
\subcaption{High-pass task: MLP} 
\label{subfig:high-pass-gcn}
\end{subfigure}
\hfill
\begin{subfigure}{.24\columnwidth}
\includegraphics[clip,width=\columnwidth]{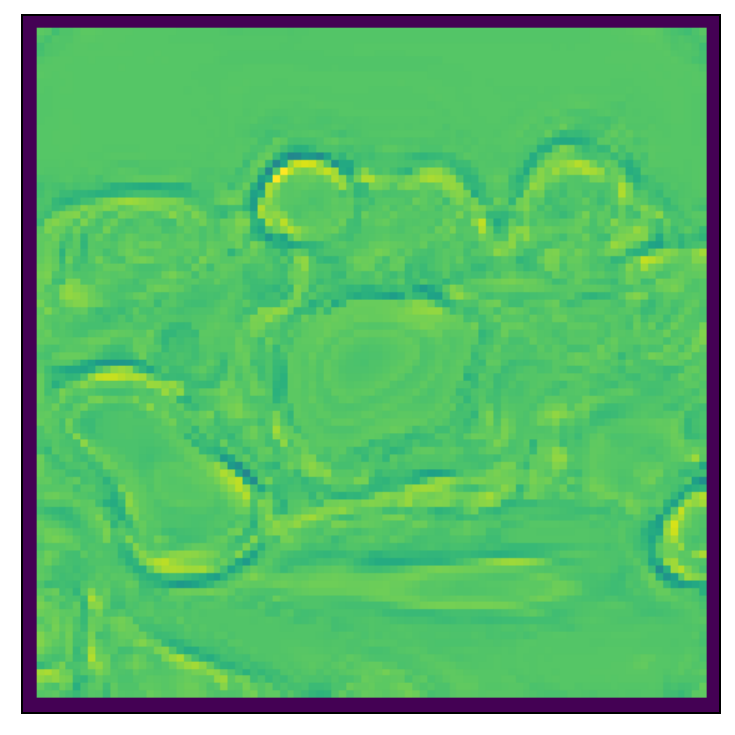}
\subcaption{High-pass task: GIN} 
\label{subfig:high-pass-gin}
\end{subfigure}
\hfill
\begin{subfigure}{.24\columnwidth}
\includegraphics[clip,width=\columnwidth]{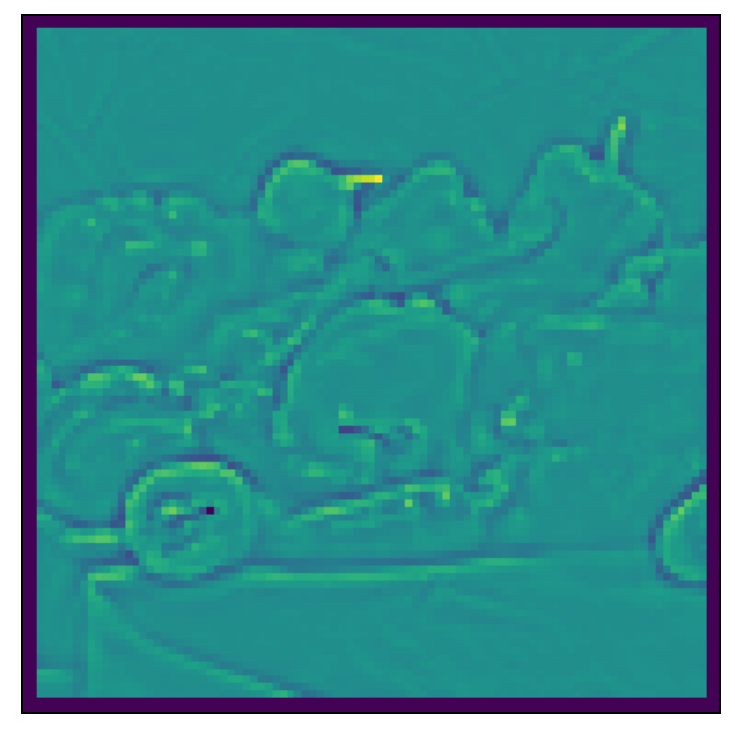}
\subcaption{High-pass task: ChevNet} 
\label{subfig:high-pass-chev}
\end{subfigure}
\hfill
\begin{subfigure}{.24\columnwidth}
\includegraphics[clip,width=\columnwidth]{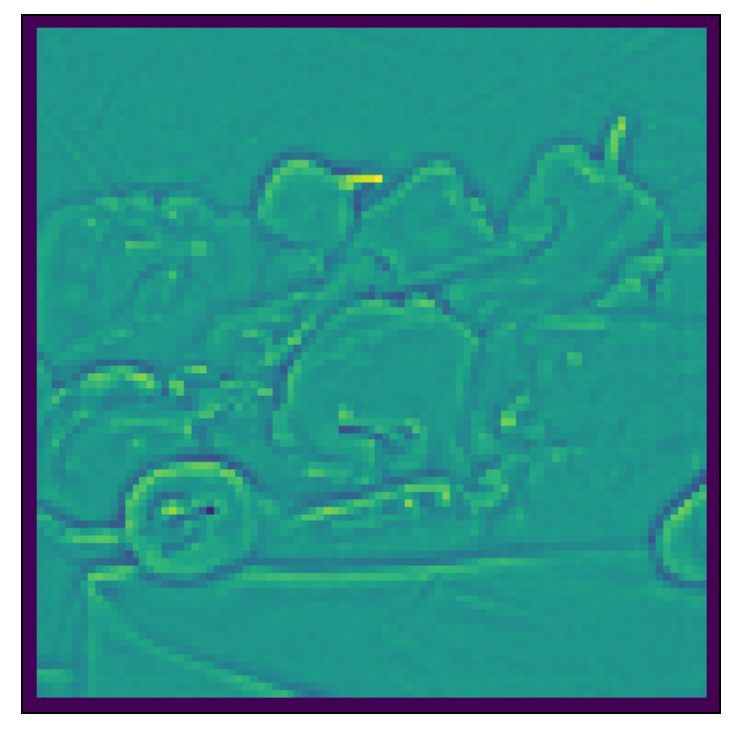}
\subcaption{High-pass task: ours} 
\label{subfig:high-pass-slicers}
\end{subfigure}
\caption{The low-pass, band-pass and high-pass images learned using MLP, GIN, ChevNet and our method. The images learned using our method better resemble the images shown in \cref{fig:img_filters} across all three categories.}
\label{fig:filtered_img}
\end{figure}

\subsection{F. Dataset Details}
\label{appendix:datasets}

We use 6 datasets listed in \cref{tab:dataset_detail}. \texas{}, \wisconsin{} and \cornell{} are graphs of web page links between universities, known as the CMU WebKB datasets. We use the pre-processed version in \citet{Pei2020}, where nodes are classify into 5 categories of course, faculty, student, project and staff. \squirrel{} and \chameleon{} are graphs of web pages in Wikipedia, originally collected by \citet{DBLP:journals/aim/SenNBGGE08}, then \citet{Pei2020} classifies nodes into 5 classes according to their average traffic. \actor{} is a graph of actor co-occurrence in films based on Wikipedia, modified by \citet{Pei2020} based on \citet{DBLP:conf/kdd/TangSWY09}.

\begin{table}[h]
\centering
\begin{tabular}{lcccc}
\toprule
Dataset & Classes & Nodes & Edges & Features \\
\midrule
\chameleon{}        & 5 & 2,277 & 36,101 & 2,325\\
\squirrel{}         & 5 & 5,201 & 217,073 & 2,089\\
\actor{}            & 5 & 7,600 & 26,752 & 931 \\
\texas{}            & 5 & 183 & 325 & 1,703 \\
\cornell{}          & 5 & 183 & 298 & 1,703  \\
\wisconsin{}        & 5 & 251 & 515 & 1,703     \\
\bottomrule
\end{tabular}
\caption{Dataset details}
\label{tab:dataset_detail}
\end{table}

\begin{table*}[h]
\centering
    \resizebox{0.65\textwidth}{!}{
          \begin{tabular}{l | l l  l  l  l l} 
             \toprule
              & \actor & \chameleon & \squirrel & \wisconsin & \cornell & \texas\\              
             \midrule
             GCN ($\mR\frown\mX$) & $\textbf{36.2}\pm1.0$& $\textbf{66.9}\pm3.1$& $\textbf{55.7}\pm2.4$& $\textbf{83.1}\pm3.2$& $\textbf{79.2}\pm6.3$& $\textbf{78.4}\pm5.4$\\
             GCN ($\mR$) & $36.1\pm0.9$& $53.5\pm6.1$& $47.4\pm4.1$& $78.4\pm4.2$& $71.1\pm6.0$& $67.8\pm8.3$\\
             \midrule
             CHEV($\mR\frown\mX$) & $\textbf{36.0}\pm1.1$& $66.8\pm1.8$& $\textbf{55.0}\pm2.0$& $84.3\pm3.2$& $80.8\pm4.1$& $80.0\pm4.8$\\
             CHEV($\mR$) & $36.0\pm1.2$& $53.8\pm6.8$& $46.2\pm4.3$& $82.4\pm5.1$& $72.3\pm4.7$& $77.3\pm4.5$\\
             \midrule
             ARMA ($\mR\frown\mX$) & $35.2\pm0.7$& $\textbf{68.4}\pm2.3$& $\textbf{55.6}\pm1.7$& $\textbf{84.5}\pm0.3$& $\textbf{81.1}\pm6.1$& $81.1\pm4.2$\\ 
             ARMA ($\mR$) & $35.1\pm0.9$& $53.2\pm6.4$& $47.2\pm4.5$& $77.8\pm6.5$& $66.8\pm6.5$& $73.5\pm7.2$\\ 
             \midrule
             GAT ($\mR\frown\mX$) & $\textbf{35.6}\pm0.7$& $\textbf{66.5}\pm2.6$& $\textbf{56.3}\pm2.2$& $\textbf{84.3}\pm3.7$& $\textbf{81.9}\pm5.4$& $\textbf{79.8}\pm4.3$\\
             GAT ($\mR$) & $35.1\pm0.9$& $53.2\pm7.0$& $46.5\pm4.5$& $80.6\pm3.7$& $71.6\pm7.6$& $66.8\pm7.9$\\
             \midrule
             SGC ($\mR\frown\mX$) & $\textbf{34.9}\pm0.7$& $\textbf{67.1}\pm2.9$& $\textbf{52.3}\pm2.3$& $\textbf{77.8}\pm4.7$& $\textbf{73.5}\pm4.3$& $\textbf{74.4}\pm6.0$\\
             SGC ($\mR$) & $35.0\pm0.6$& $52.6\pm6.7$& $43.9\pm5.1$& $76.1\pm5.0$& $68.1\pm7.2$& $66.2\pm7.2$\\
             \midrule
             APPNP ($\mR\frown\mX$) & $35.9\pm1.1$& $\textbf{66.7}\pm2.7$& $\textbf{55.9}\pm2.9$& $84.3\pm4.2$& $\textbf{81.6}\pm5.4$& $80.3\pm4.8$\\
             APPNP ($\mR$) & $36.0\pm3.7$& $53.8\pm6.5$& $48.2\pm4.7$& $82.0\pm4.4$& $72.7\pm6.6$& $76.2\pm4.4$\\
             \midrule
             GPRGNN ($\mR\frown\mX$) & $34.1\pm1.1$& $65.5\pm2.2$ & $\textbf{47.1}\pm2.4$ & $85.1\pm4.1$ & $80.3\pm6.3$ & $84.3\pm5.1$ \\
             GPRGNN ($\mR$) & $35.2\pm1.4$ & $53.5\pm5.7$ & $45.7\pm3.5$ & $81.2\pm4.3$ & $81.0\pm5.8$ & $74.3\pm6.8$ \\
             \midrule
        \end{tabular}
        }
\caption{\label{tab:ablation}Ablation study on model input.}
\end{table*}

\subsection{G. Hyper-parameter Details}
Hyper-parameter tuning is conducted separately using grid search for the graph structuring and node classification phases. In the graph restructuring phase, we fix $m=4$ in~\cref{eqn:slicer} and search for spectrum slicer width $s$ in \{10, 20, 40\}. $\epsilon$ in the loss function~\cref{eqn:objective} is searched in \{0.05, 0.1, 0.2\}, and the number of negative samples is searched in \{10, 20, 30, 64\}. The restructured graph with the best validation homophily is saved.
In the node classification phase, we use the same search range for common hyper-parameters: 0.1-0.9 for dropout rate, \{32, 64, 128, 256, 512, 1024\} for hidden layer dimension, \{1, 2, 3\} for the number of layers, and \{0, 5e-5, 5e-4, 5e-3\} for weight decay. We again use grid search for model-specific hyper-parameters. For CHEV, we search for the polynomial orders in \{1, 2, 3\}. For ARMA, we search for the number of parallel stacks in \{1, 2, 3, 4, 5\}. For GAT, we search for the number of attention heads in \{8, 10, 12\}. For APPNP, we search for teleport probability in 0.1-0.9.

We found the sensitivity to hyper-parameters varies. The spectral slicer width $s$, for example, yields similar homophily at 20 (slicer width = 0.2) and 40 (slicer width = 0.1), and starts to drop in performance when goes below 10 (slicer width = 0.4). This shows that, when the slicer is too wide, it fails to differentiate spectrum ranges that are of high impact, especially for datasets where high-impact spectrum patterns are band-passing (e.g. \squirrel). For the number of negative samples, smaller datasets tend to yield good performance on smaller numbers, while large datasets tend to yield more stable results (lower variance) on larger numbers. This is expected because, on the one hand, the total number of negative samples is much larger than that of positive samples, and a large negative sample set would introduce sample imbalance in small datasets. On the other hand, since samples are randomly selected, a large sample set tends to reduce variance in the training samples.

\subsection{H. Ablation Study}
\label{appendix:ablation_input}
On the one hand, in classical SC, clustering is normally performed based solely on graph structure, ignoring node features. Following this line, \citet{DBLP:conf/icassp/TremblayPBGV16} considers only random Gaussian signals when approximating SC under \cref{proposition:tremblay}. On the other hand, \citet{Bianchi2019-qj} adotps node feature for learning SC, with the motivation that "node features represent a good initialization". They also empirically show, on graph classification, that using only node feature or graph structure alone failed to yield good performance. To explore if the auxiliary information given by node features is indeed beneficial, we conduct ablation study and report the performance without including node features, denoted as $\mR$, in contrast to using both node features and graph structure, denoted as $\mR \frown \mX$. Results are shown in \cref{tab:ablation}. We note, except two small graphs \wisconsin{} and \actor{}, the node feature does contribute significantly to model performance.

\subsection{I. Training Runtime}
\label{appendix:runtime}

Apart from the theoretical analysis on time complexity in \cref{subsec:complexity}, we measure the model training time and report the average over 100 epochs in \cref{tab:runtime}. Results are reported in milliseconds.

\begin{table}[ht]
\centering
\resizebox{0.7\columnwidth}{!}{%
\begin{tabular}{l | c}
\toprule
                             & Runtime per epoch (ms) \\
\midrule
\actor                       & 109.85                 \\
\chameleon                   & 41.53                  \\
\squirrel                    & 87.95                  \\
\wisconsin                   & 4.52                   \\
\cornell                     & 4.88                   \\
\texas                       & 5.16                   \\
\bottomrule
\end{tabular}%
}
\caption{\label{tab:runtime}Training runtime.}
\end{table}

\subsection{J. Experiments on Synthetic Datasets}
\label{appendix:synthetic_datasets}
The correlation between homophily and GNN performance has been studied by~\citet{Zhu2021-lf, Luan2021-xl,ma2022}. In general higher homophily yields better prediction performance but, as shown by~\citet{Luan2021-xl, ma2022}, in some cases, heterophily can also contribute to better model performance. Therefore, it is worthwhile to investigate how the proposed density-aware homophily $h_{den}$ correlates to GNN performance.  Because the proposed homophily metric $h_{den}$ considers edge density as well as the proportion between intra-class and inter-class edges, existing synthetic data frameworks~\citep{DBLP:conf/nips/ZhuYZHAK20, Luan2021-xl,ma2022} cannot generate graphs with a target $h_{den}$. To address this problem, we design a synthetic graph generator that allows full control on homophily of the synthetic graphs.

\paragraph{Data generation process}
We generate synthetic graphs from a base graph where the node features are kept while all edges removed. As a result, the initial graph is totally disconnected. We then introduce initial edges by randomly connecting nodes until a minimum density threshold is met. Afterwards, edges are added one-by-one following the rule: if the current $h_{den}$ is lower than the target, a uniformly sampled intra-class edge will be added, and vice versa, until the target $h_{den}$ is satisfied. 
We generate synthetic graphs based on the \cora{}~\citep{DBLP:journals/aim/SenNBGGE08} dataset with varying $h_{den}$ from 0 to 1. We use the same node feature and node label as the original base graphs. 

\paragraph{Results on synthetic datasets}
Following~\citet{Zhu2021-lf}, we report the results for different homophily levels under the same set of hyperparameters for each model. We adopt the same training and early stopping configuration as reported in \cref{sec:emperiments}. Results are in \cref{fig:syn_results}. We also report the homophily score on the rewired graph for comparison in \cref{fig:syn_homo}. 

\begin{figure}[h!]
\centering
\includegraphics[clip,width=0.7\columnwidth]{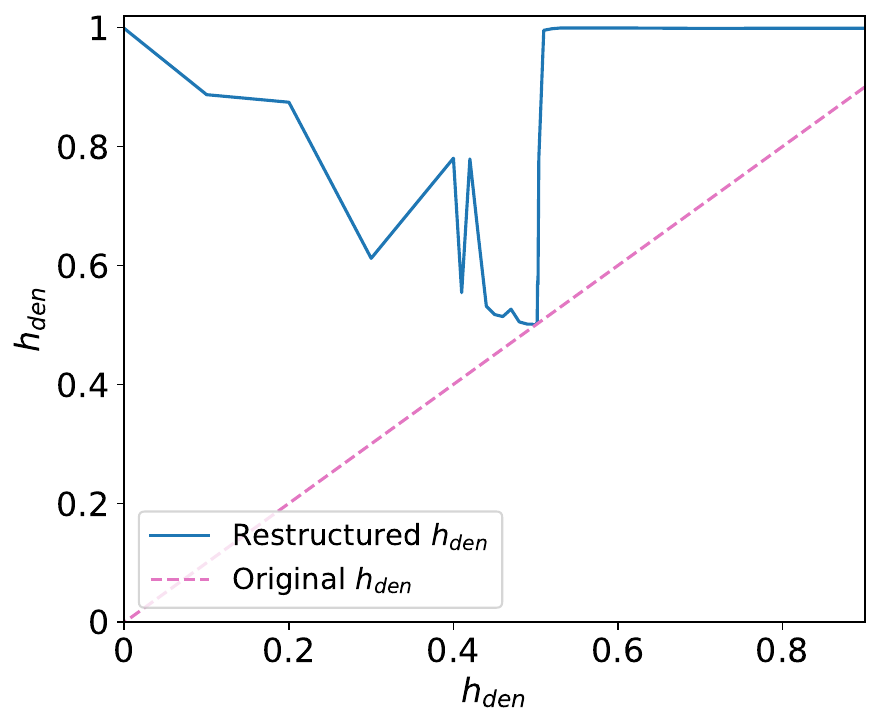}
\caption{Homophily on synthetic datasets}
\label{fig:syn_homo}\vspace{0.4cm}
\centering
\begin{subfigure}{.45\columnwidth}
\includegraphics[clip,width=\textwidth]{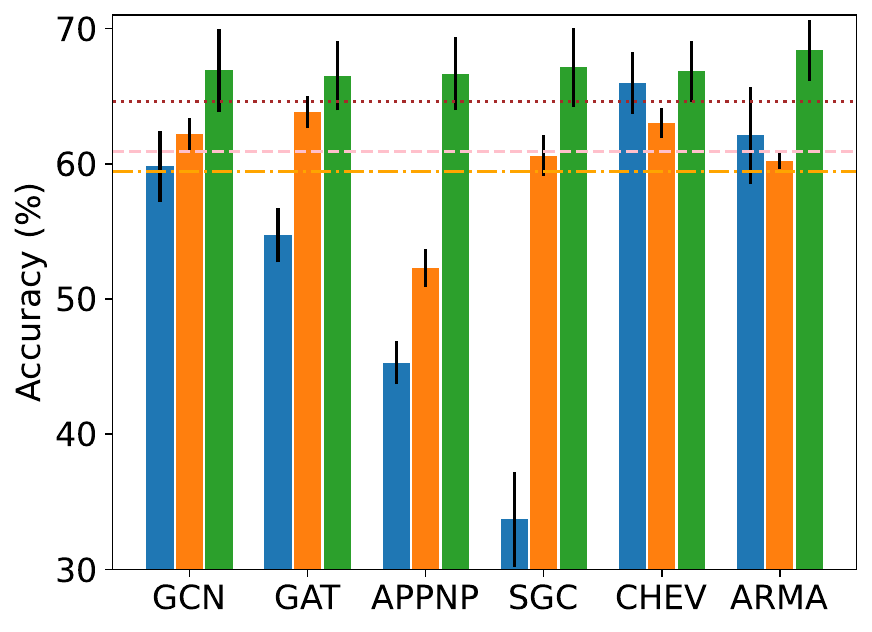}
\subcaption{\chameleon{}\label{subfig:hist_chame}}
\end{subfigure}%
\begin{subfigure}{.45\columnwidth}
\includegraphics[clip,width=\textwidth]{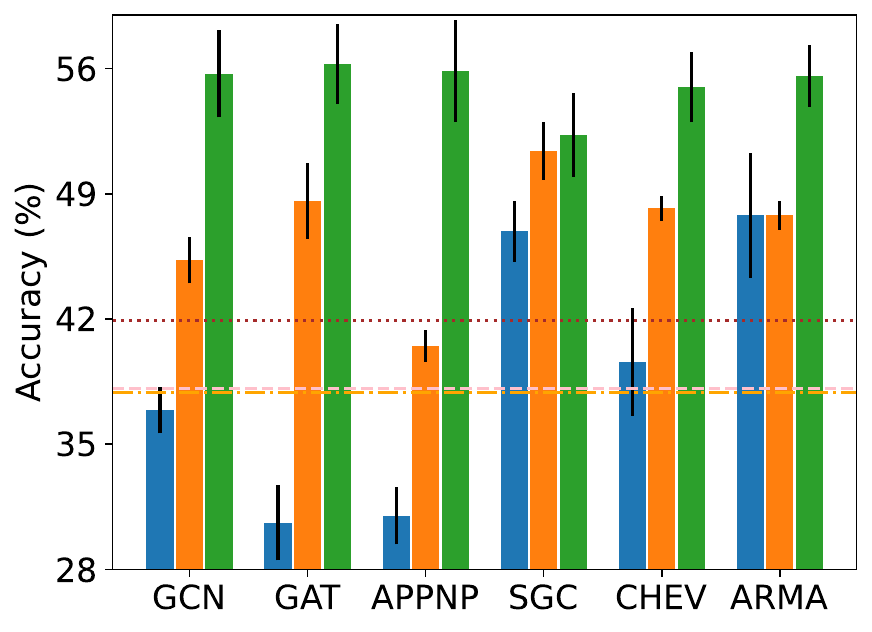}
\subcaption{\squirrel{}\label{subfig:hist_squir}}
\end{subfigure}%
\\
\begin{subfigure}{.45\columnwidth}
\includegraphics[clip,width=\textwidth]{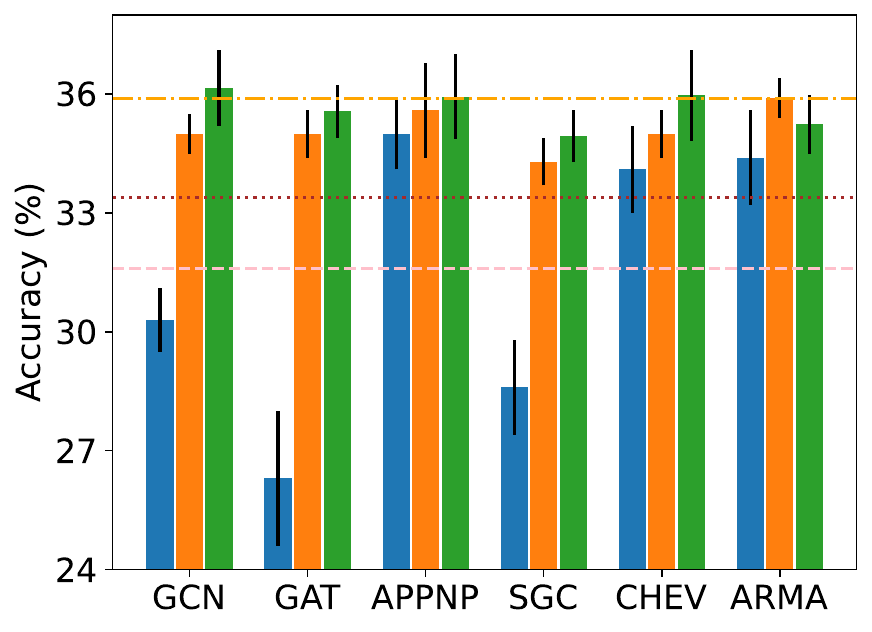}
\subcaption{\actor{}\label{subfig:hist_actor}}
\end{subfigure}%
\begin{subfigure}{.45\columnwidth}
\includegraphics[clip,width=\textwidth]{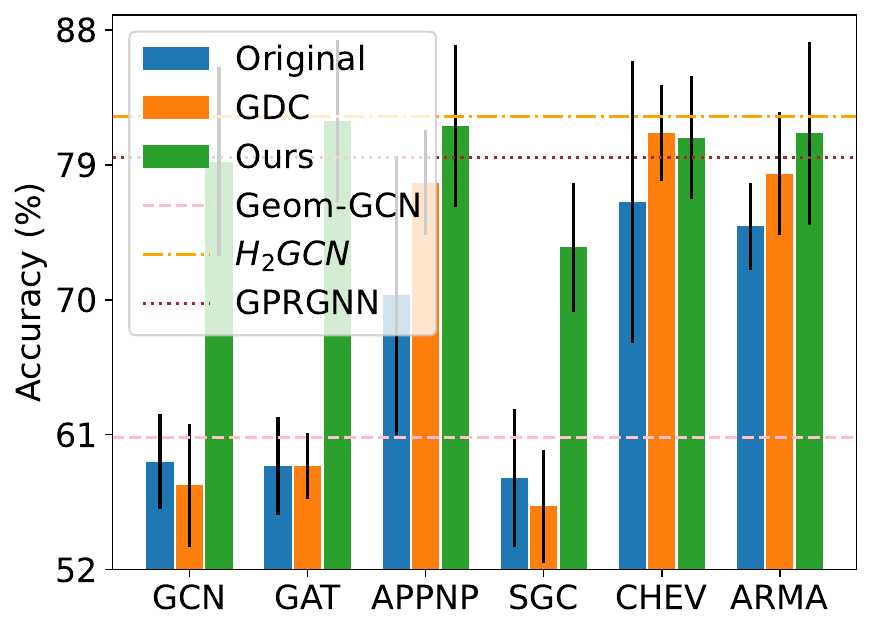}
\subcaption{\cornell{}\label{subfig:hist_cornell}}
\end{subfigure}%
\\
\begin{subfigure}{.45\columnwidth}
\includegraphics[clip,width=\textwidth]{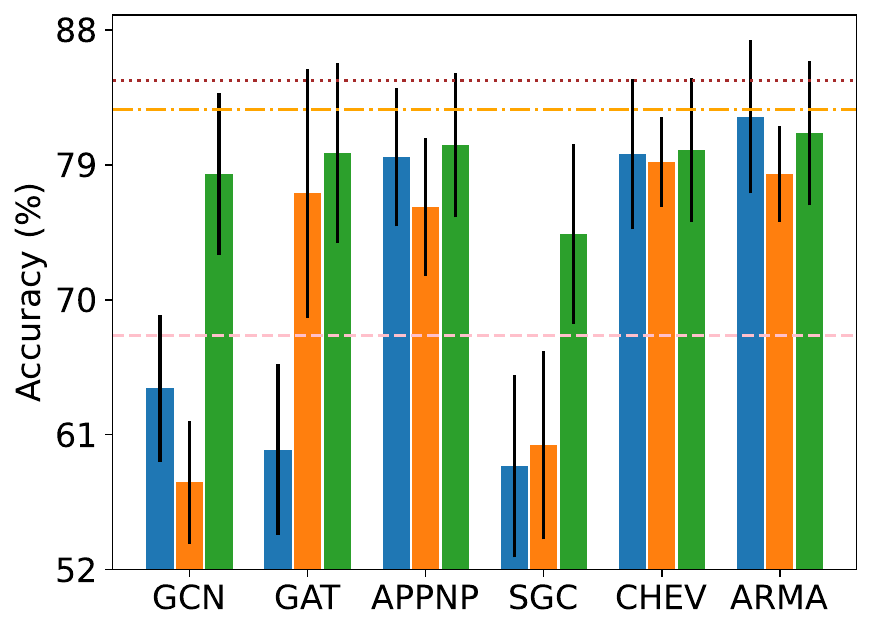}
\subcaption{\texas{}\label{subfig:hist_texas}}
\end{subfigure}%
\begin{subfigure}{.45\columnwidth}
\includegraphics[clip,width=\textwidth]{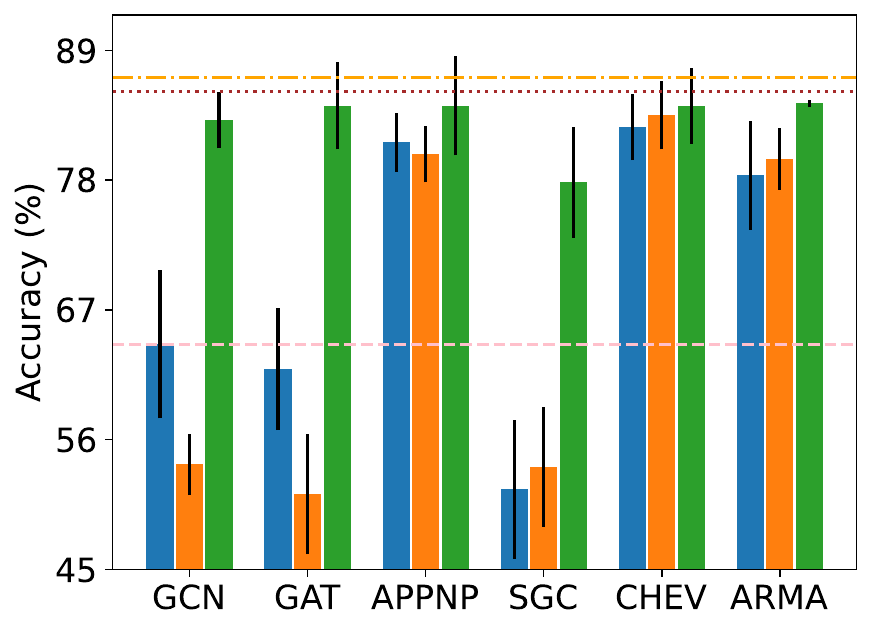}
\subcaption{\wisconsin{}\label{subfig:hist_wisconsin}}
\end{subfigure}%
\caption{Node classification accuracy of GNNs with and without restructuring on heterophilic graphs.}
\label{fig:histogram_hete}
\end{figure}

\subsection{K. Experiments on Node Classification}
\cref{fig:histogram_hete} shows the node classification accuracy from \cref{tab:results}. The performance of classical GNNs before and after restructuring are plotted as histograms. Performance of three GNNs that target less-homophilic datasets, Geom-GCN, $H_2GCN$ and GPRGNN, are plotted as horizontal lines.


\end{document}